\newcommand\jonathan[1]{\textcolor{green}{[JK: #1]}}
\newcommand\se[1]{\textcolor{red}{[SE: #1]}}
\newcommand\ashish[1]{\textcolor{blue}{[AS: #1]}}
 \renewcommand\jonathan[1]{} 
 \renewcommand\se[1]{} 
 \renewcommand\ashish[1]{} 
\newcommand{\beq}[1][\vspace{0.3em}]{#1\begin{equation}}
\newcommand{\eeq}{\end{equation}}
\newcommand{\bit}{\vspace{0mm}\begin{itemize}}
\newcommand{\eit}{\vspace{0mm}\end{itemize}}
\newcommand{\ben}{\vspace{0mm}\begin{enumerate}}
\newcommand{\een}{\vspace{0mm}\end{enumerate}}
\newtheorem{theorem}{Theorem}
\newtheorem{prop}{Proposition}
\newtheorem{lemma}{Lemma} 
\newtheorem{corollary}{Corollary}[lemma]
\newtheorem{definition}{Definition}
\newtheorem{assumption}{Assumption}
\newcommand{\E}{\mathbb{E}}
\newcommand{\R}{\mathcal{R}} 
\newcommand\mydots{\makebox[1em][c]{.\hfil.\hfil.}}
\DeclareMathOperator*{\argmax}{arg\,max}
\DeclareMathOperator*{\argmin}{arg\,min}
\newcommand{\namecite}[1]{\citeauthor{#1}~\shortcite{#1}}
\newcommand{\citenobrackets}[1]{\citeauthor{#1},~\citeyear{#1}}
\newcommand\citep{\cite}
\newcommand\citet{\namecite}
\newcommand\citealp{\citenobrackets}
\title{
Approximate Inference via Weighted Rademacher Complexity
}
\author{
  Jonathan Kuck\\
  Computer Science Department\\
  Stanford University\\
  \texttt{jkuck@cs.stanford.edu}
  \\\And
  Ashish Sabharwal\\
  Allen Institute for Artificial Intelligence\\
  Seattle, WA\\
  \texttt{ashishs@allenai.org}
  \\\And
  Stefano Ermon\\
  Computer Science Department\\
  Stanford University\\
  \texttt{ermon@cs.stanford.edu}
}
\begin{document}
\maketitle

\begin{abstract}
Rademacher complexity is often used to characterize the learnability of a hypothesis class and is known to be related to the class size. 
We leverage this observation and introduce a new technique for estimating the size of an arbitrary weighted set, defined as the sum of weights of all elements in the set. 
Our technique provides upper and lower bounds on a novel generalization of Rademacher complexity to the weighted setting in terms of the weighted set size. 
  This generalizes Massart's Lemma, a known upper bound on the Rademacher complexity 
 in terms of the unweighted set size.
  We show that the weighted Rademacher complexity can be estimated by solving a randomly perturbed optimization problem, allowing us to derive high-probability bounds on the size of any weighted set.
We apply our method to the problems of calculating the partition function of an Ising model and computing propositional model counts (\#SAT).  Our experiments demonstrate that we can produce tighter bounds than competing methods in both the weighted and unweighted settings.
\end{abstract}

\section{Introduction}
A wide variety of problems can be reduced to computing the sum of (many) non-negative numbers.  These include calculating the partition function of a graphical model, propositional model counting (\#SAT), and calculating the permanent of a non-negative matrix. Equivalently, each can be viewed as computing the discrete integral of a non-negative weight function. Exact summation, however, is generally intractable due to the curse of dimensionality \cite{bellman2015adaptive}.

As alternatives to exact computation, variational methods \cite{jordan1998introduction,wainwright2008graphical} and sampling \cite{jerrum1996markov,madras2002lectures} are popular approaches for approximate summation. However, they generally do not guarantee the estimate's quality.  

An emerging line of work estimates and formally bounds propositional model counts or, more generally, discrete integrals \cite{ermon2013taming,chakraborty2013scalable,ermon2014low,zhao2016closing}.  
These approaches reduce the problem of integration to solving a small number of optimization problems involving the same weight function but subject to additional random constraints introduced by a random hash function.    
This results in approximating the \#P-hard problem of exact summation \cite{valiant1979complexity} using the solutions of NP-hard optimization problems.

Optimization can be performed efficiently for certain classes of weight functions, such as those involved in the computation of the permanent of a non-negative matrix.  If instead of summing (permanent computation) we maximize the same weight function, we obtain a maximum weight matching problem, which is in fact solvable in polynomial time~\cite{kuhn1955hungarian}.
However, adding hash-based constraints makes the maximum matching optimization problem intractable, which limits the application of randomized hashing approaches~\cite{uai13LPCount}.  On the other hand, there do exist fully polynomial-time randomized approximation schemes (FPRAS) for non-negative permanent computation~\cite{jerrum2004polynomial,bezakova2006accelerating}.  This gives hope that approximation schemes may exist for other counting problems even when optimization with hash-based constraints is intractable.

We present a new method for approximating and bounding the size of a general weighted set (i.e., the sum of the weights of its elements) using geometric arguments based on the set's shape.  Our approach, rather than relying on hash-based techniques, establishes a novel connection with Rademacher complexity~\cite{shalev2014understanding}.  This generalizes geometric approaches developed for the unweighted case to the weighted setting, such as the work of \citet{barvinok1997approximate} who uses similar reasoning but without connecting it with Rademacher complexity. 
In particular, we first generalize Rademacher complexity to weighted sets. 
While Rademacher complexity is defined as the maximum of the sum of Rademacher variables over a set, \emph{weighted} Rademacher complexity also accounts for the weight of each element in the set. Just like Rademacher complexity is related to the size of the set, we show that \emph{weighted} Rademacher complexity is related to the total weight of the set.
Further, it can be estimated by solving multiple instances of a maximum weight optimization problem, subject to random Rademacher perturbations.
Notably, the resulting optimization problem turns out to be computationally much simpler than that required by the aforementioned randomized hashing schemes.  In particular, if the weight function is log-supermodular, the corresponding weighted Rademacher complexity can be estimated efficiently, as our perturbation does not change the original optimization problem's complexity~\cite{orlin2009faster,bach2013learning}.

Our approach most closely resembles a recent line of work involving the Gumbel distribution \cite{hazan2012partition,hazan2013sampling,hazan2016high,pmlr-v70-balog17a,mussmann2016learning,mussmann2017fast}.  There, the Gumbel-max idea is used to bound the partition function by performing MAP inference on a model where the unnormalized probability of each state is perturbed by random noise variables sampled from a Gumbel distribution.  While very powerful, exact application of the Gumbel method is impractical, as it requires exponentially many independent random perturbations.  One instead uses local approximations of the technique.

Empirically, on spin glass models we show that our technique yields tighter upper bounds and similar lower bounds compared with the Gumbel method, given similar computational resources. On a suite of \#SAT model counting instances our approach generally produces comparable or tighter upper and lower bounds given limited computation.

\section{Background}
Rademacher complexity is an important tool used in learning theory to bound the generalization error of a hypothesis class~\citep{shalev2014understanding}.

\begin{definition}
\label{rademacher}
The Rademacher complexity of a set $A \subseteq \mathbb{R}^n$ is defined as:
\begin{equation}
R(A) \coloneqq \frac{1}{n} \E_{\mathbf{c}} \left[\sup_{\mathbf{a} \in A} \sum_{i=1}^n c_i a_i \right],
\label{eq:rademacher}
\end{equation}
where $\E_{\mathbf{c}}$ denotes expectation over $\mathbf{c}$, and $\mathbf{c}$ is sampled uniformly from $\{-1, 1\}^n$.
\end{definition}
As the name suggests, it is a measure of the complexity of set $A$ (which, in learning theory, is usually a hypothesis class). It measures how ``expressive'' $A$ is by evaluating how well we can ``fit'' to a random noise vector $\mathbf{c}$ by choosing the closest vector 
(or hypothesis) from $A$.
Intuitively, Rademacher complexity is related to $|A|$, the number of vectors in $A$, another crude notion of complexity of $A$. However, it also depends on how vectors in $A$ are arranged in the ambient space $\mathbb{R}^n$. A central focus of this paper will be establishing quantitative relationships between $R(A)$ and $|A|$.

A key property of Rademacher complexity that makes it extremely useful in learning theory is that it can be estimated using a small number of random noise samples $\mathbf{c}$ under mild conditions~\citep{shalev2014understanding}. The result follows from McDiarmid's inequality:  
\begin{prop}[\citealp{mcdiarmid1989method}]
\label{mcdiarmid}
Let $X_1, \mydots, X_m \in \mathcal{X}$ be independent random variables.
Let $f : \mathcal{X}^m \mapsto \mathbb{R}$ be a function that satisfies the bounded differences condition that $\forall i \in \{1,\mydots, m\}$ and $ \forall x_1,\mydots, x_m, x_i' \in \mathcal{X}$:
\begin{align*}
|f(x_1, \mydots, x_i, \mydots, x_m)-f(x_1, \mydots, x_i', \mydots, x_m)| \leq d_i. \\
\end{align*}
Then for all $\epsilon>0$
\scriptsize
\[
\Pr \Big[\big|f\left(X_1,\mydots, X_m\right)-\E\big[f\left(X_1,\mydots, X_m \right) \big] \big| \geq \epsilon \Big] \leq \exp \left( \frac{-2 \epsilon^2}{\sum_j d_j^2} \right).
\]
\normalsize
\end{prop}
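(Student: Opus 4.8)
The plan is to prove this classical inequality by the method of bounded martingale differences (the Azuma--Hoeffding route). First I would build the Doob martingale associated with $f$: set $Z_0 = \E[f(X_1,\dots,X_m)]$ and $Z_i = \E[f(X_1,\dots,X_m) \mid X_1,\dots,X_i]$ for $i = 1,\dots,m$, so that $Z_m = f(X_1,\dots,X_m)$ and the increments $D_i = Z_i - Z_{i-1}$ telescope to $f(X_1,\dots,X_m) - \E[f]$. By construction $(Z_i)$ is a martingale with respect to the filtration generated by $X_1,\dots,X_i$, so $\E[D_i \mid X_1,\dots,X_{i-1}] = 0$.

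The key step is to control the conditional range of each increment. Fixing $x_1,\dots,x_{i-1}$, define $g(x) = \E[f(x_1,\dots,x_{i-1},x,X_{i+1},\dots,X_m)]$ and put $L_i = \inf_x g(x) - Z_{i-1}$, $U_i = \sup_x g(x) - Z_{i-1}$. Using the independence of the $X_j$ to rewrite the conditional expectations as partial expectations over the ``future'' coordinates, one checks that $L_i \le D_i \le U_i$ almost surely, and the bounded differences hypothesis gives $U_i - L_i = \sup_x g(x) - \inf_x g(x) \le d_i$, since changing the $i$-th coordinate changes $f$, hence its partial expectation, by at most $d_i$. Thus, conditionally on the past, $D_i$ lies in an interval of length at most $d_i$ whose endpoints are measurable with respect to $X_1,\dots,X_{i-1}$.

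I would then invoke Hoeffding's lemma conditionally: a random variable supported on an interval of length $d_i$ with conditional mean zero satisfies $\E[e^{\lambda D_i} \mid X_1,\dots,X_{i-1}] \le \exp(\lambda^2 d_i^2/8)$. Peeling off the increments one at a time with the tower property yields $\E[e^{\lambda (f - \E f)}] = \E\big[\prod_{i=1}^m e^{\lambda D_i}\big] \le \exp\big(\lambda^2 \sum_{j} d_j^2 / 8\big)$. A Chernoff bound $\Pr[f - \E f \ge \epsilon] \le e^{-\lambda \epsilon}\, \E[e^{\lambda(f-\E f)}]$, optimized at $\lambda = 4\epsilon / \sum_j d_j^2$, then gives the upper-tail estimate $\exp(-2\epsilon^2/\sum_j d_j^2)$; applying the same argument to $-f$ and adding the two tail probabilities produces the stated two-sided bound (up to the leading constant, since the symmetric version as literally written would more naturally carry a factor $2$ in front — I would double-check whether the one-sided or two-sided form is intended).

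The step I expect to be the main obstacle is the second one: rigorously showing that $D_i$ is sandwiched between the past-measurable quantities $L_i$ and $U_i$ with $U_i - L_i \le d_i$. This is where independence of the coordinates is essential — it lets one express $Z_i$ and $Z_{i-1}$ as integrals over the future variables with the first $i$ coordinates frozen — and where the bounded differences condition must be applied pointwise inside the expectation, after arguing that the relevant $\inf$ and $\sup$ over the replaced coordinate can be handled (attained, or approached along a sequence). Once that sandwich is in place, Hoeffding's lemma and the Chernoff optimization are entirely routine.
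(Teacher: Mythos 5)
Your proof is the standard and correct one: the Doob martingale decomposition, the conditional sandwich $L_i \le D_i \le U_i$ with $U_i - L_i \le d_i$ (which does require independence exactly as you describe), conditional Hoeffding's lemma, the tower property, and Chernoff optimization at $\lambda = 4\epsilon/\sum_j d_j^2$. The paper itself gives no proof of this proposition --- it is imported as a known result --- so there is no alternative argument to compare against; yours is the canonical route.

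Your parenthetical caveat about the leading constant is the one substantive point worth dwelling on, and you are right to flag it. The one-sided Chernoff argument yields $\Pr[f - \E f \ge \epsilon] \le \exp(-2\epsilon^2/\sum_j d_j^2)$, and the two-sided bound obtained by applying it to $f$ and $-f$ and summing carries a factor of $2$ in front. The proposition as stated in the paper asserts the two-sided form \emph{without} that factor, which the standard argument does not deliver. This is not merely cosmetic here: with $d_i = 2$ and $\epsilon = \sqrt{6n}$ the paper computes a failure probability of $e^{-3} \approx 0.0498$ and concludes the bound holds with probability greater than $0.95$, whereas the two-sided bound from your (correct) proof gives $2e^{-3} \approx 0.0996$, i.e.\ confidence only greater than $0.90$. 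So your proof is sound, but it establishes a slightly weaker statement than the one written, and the discrepancy propagates to the confidence levels claimed downstream; one would either restate the proposition one-sidedly (which suffices for each of the paper's upper and lower bounds separately) or adjust $\epsilon$ to absorb the factor of $2$.
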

McDiarmid's inequality says we can bound, with high probability, how far a function $f$ of random variables may deviate from its expected value, given that the function does not change much when the value of a single random variable is changed.  Because the function in  Eq.~(\ref{eq:rademacher}) satisfies this property~\citep{shalev2014understanding}, we can use Eq.~(\ref{eq:rademacher}) to bound $R(A)$ with high probability by computing the supremum for only a small number of noise samples $\mathbf{c}$. 

\section{Problem Setup} \label{problem_setup}

In this section we formally define our problem and introduce the optimization oracle central to our solution.  Let $w:\{-1,1\}^n \rightarrow [0, \infty)$ be a non-negative weight function.  We consider the problem of computing the sum
\[
Z(w) = \sum_{\mathbf{x} \in \{-1,1\}^n} w(\mathbf{x}).
\]
Many problems, including computing the partition function of an undirected graphical model, where $w(\mathbf{x})$ is the unnormalized probability of state $\mathbf{x}$ (see \citet{koller2009probabilistic}), propositional model counting (\#SAT), and computing the permanent of a non-negative matrix can be reduced to calculating this sum.  The problem is challenging because explicit calculation requires summing over $2^n$ states, which is computationally intractable in cases of interest.   

Due to the general intractability of exactly calculating $Z(w)$, we focus on an efficient approach for estimating $Z(w)$ which additionally provides upper and lower bounds that hold with high probability.  Our method depends on the following assumption:
\begin{assumption} \label{opt_oracle}  We assume existence of an optimization oracle that can output the value
\begin{equation}
\delta(\mathbf{c}, w) = \max_{\mathbf{x} \in \{-1,1\}^n} \{\langle \mathbf{c},\mathbf{x} \rangle + \log w(\mathbf{x})\}
\label{eq:randomOpt}
\end{equation}
for any vector $\mathbf{c} \in \{-1,1\}^n$ and weight function $w:\{-1,1\}^n \rightarrow [0, \infty)$.
\end{assumption}
Note that throughout the paper we simply denote $\log_2$ as $\log$, $\log_e$ as $\ln$, and assume $\log 0 = - \infty$.
Assumption \ref{opt_oracle} is reasonable, as there are many classes of models where such an oracle exists.  For instance, polynomial time algorithms exist for finding the maximum weight matching in a weighted bipartite graph \cite{hopcroft1971n5,jonker1987shortest}. 
Graph cut algorithms can be applied to efficiently maximize a class of energy functions \cite{kolmogorov2004energy}.  
More generally, MAP inference can be performed efficiently for any log-supermodular weight function~\cite{orlin2009faster,chakrabarty2014provable,fujishige1980lexicographically}.  
Our perturbation preserves the submodularity of $-\log w(\mathbf{x})$, as $\langle \mathbf{c}, \mathbf{x} \rangle$ can be viewed as $n$ independent single variable perturbations, so we have an efficient optimization oracle whenever the original weight function is log-supermodular.
Further, notice that this is a much weaker assumption compared with the optimization oracle required by randomized hashing methods \cite{chakraborty2013scalable,ermon2014low,zhao2016closing}.

If an approximate optimization oracle exists that can find a value within some known bound of the maximum, we can modify our bounds to use the approximate oracle.  This may improve the efficiency of our algorithm or extend its use to additional problem classes.  For the class of log-supermodular distributions, approximate MAP inference is equivalent to performing approximate submodular minimization \cite{jegelka2011fast}.

We note that even when an efficient optimization oracle exists, the problem of exactly calculating $Z(w)$ is generally still hard.  For example, polynomial time algorithms exist for finding the maximum weight perfect matching in a weighted bipartite graph.  However, computing the permanent of a bipartite graph's adjacency matrix, which equals the sum of weights for all perfect matchings or $Z(w)$, is still \#P-complete\cite{jerrum2004polynomial}.  A fully polynomial randomized
approximation scheme (FPRAS) exists \cite{jerrum2004polynomial,bezakova2006accelerating}, based on Markov chain Monte Carlo to sample over all perfect matchings.  However, the polynomial time complexity of this algorithm suffers from a large degree, limiting its practical use. 

\section{Weighted Rademacher Bounds on $Z(w)$}

Our approach for estimating the sum $Z(w) = \sum_{\mathbf{x}} w(\mathbf{x})$ is based on the idea that the Rademacher complexity of a set is related to the set's size. In particular, Rademacher complexity is monotonic in the sense that $R(A)\leq R(B)$ whenever $A \subseteq B$.  Note that monotonicity does not hold for $|A| \leq |B|$, that is, $R(A)$ is monotonic in the contents of $A$ but not necessarily in its size.  
We estimate the sum of arbitrary non-negative elements by generalizing the Rademacher complexity in definition \ref{weighted_rademacher}.
\begin{definition} \label{weighted_rademacher}
We define the weighted Rademacher complexity of a 
weight function
$w:\{-1,1\}^n \rightarrow [0, \infty)$ as
\begin{equation} \label{Delta_w}
\R(w) \coloneqq \E_{\mathbf{c}}\left[ \max_{\mathbf{x}\in \{-1,1\}^n} \{\langle \mathbf{c},\mathbf{x} \rangle + \log w(\mathbf{x})\} \right],
\end{equation}
for $\mathbf{c}$ sampled uniformly from $\{-1, 1\}^n$.
\end{definition}

In the notation of Eq.~(\ref{eq:randomOpt}), the weighted Rademacher complexity is simply $\R(w) = \E_\mathbf{c}[\delta(\mathbf{c}, w)]$.  
For a set $A \subseteq \{-1,1\}^n$, let $I_A:\{-1,1\}^n \rightarrow \{0, 1\}$
denote the indicator weight function for $A$, defined as $I_A(\mathbf{x}) = 1 \iff \mathbf{x} \in A$.  Then
$\R(I_A) = R(A)$, that is, the weighted Rademacher complexity is identical to the standard Rademacher complexity for indicator weight functions.  For a general weight function, the weighted Rademacher complexity extends the standard Rademacher complexity by giving each element (hypothesis) its own weight.

\subsection{Algorithmic Strategy}

The key idea of this paper is to use the weighted Rademacher complexity $\R(w)$ to provide probabilistic estimates of $Z(w)$, the total weight of $w$.

This is a reasonable strategy because as we have seen before, for an indicator weight function $I_A:\{-1,1\}^n \rightarrow \{0, 1\}$, $\R(I_A)$ reduces to the standard Rademacher complexity $R(A)$, and $Z(I_A)=|A|$ is simply the cardinality of the set.
Therefore we can use known quantitative relationships between $R(A)$ and $|A|$ from learning theory to estimate $|A|=Z(I_A)$ in terms of $R(A)=\R(I_A)$.  Although not formulated in the framework of Rademacher complexity, this is the strategy used by 
\citet{barvinok1997approximate}.

Here, we generalize these results to general weight functions $w$ and show that it is, in fact, possible to use $\R(w)$ to obtain estimates of $Z(w)$. This observation can be turned into an algorithm by observing that $\R(w)$ is the expectation of a random variable concentrated around its mean. Therefore, as we will show in Proposition \ref{weighted_rademacher_slack}, a small number of samples suffices to reliably estimate $\R(w)$ (and hence, $Z(w)$) with high probability. Whenever $w$ is `sufficiently nice' and we have access to an optimization oracle, the estimation algorithm is efficient.

\begin{algorithm}[htb]
  \caption{Rademacher Estimate of $\log Z(w)$} \label{est_alg}
  \textbf{Inputs:} A positive integer $k$ and weight function $w:\{-1,1\}^n \rightarrow [0, \infty)$.
  
  \textbf{Output:} A number $\bar{\delta}_k(w)$ which approximates $\log Z(w)= \log \left( \sum_{\mathbf{x} \in \{-1,1\}^n} w(\mathbf{x})\right)$. 
  \begin{enumerate}
    \item Sample $k$ vectors $\mathbf{c}_1, \mathbf{c}_2, \dots, \mathbf{c}_k$ independently and uniformly from $\{-1,1\}^n$.

    \item
    Apply the optimization oracle of assumption \ref{opt_oracle} to each vector $\mathbf{c}$ and compute the mean
    \[
    \bar{\delta}_k(w) = \frac{1}{k} \sum_{i=1}^k \max_{\mathbf{x} \in \{-1,1\}^n} \{\langle \mathbf{c}_i,\mathbf{x} \rangle + \log w(\mathbf{x})\}.
    \]

  \item Output $\bar{\delta}_k(w)$ as an estimator of $\R(w)$ and thus $\log Z(w)$.
  \end{enumerate}
\end{algorithm}

\subsection{Bounding Weighted Rademacher Complexity}

The weighted Rademacher complexity is an expectation over optimization problems.  The optimization problem is defined by sampling a vector, or direction since all have length $\sqrt{n}$, uniformly from $\{-1,1\}^n$ and finding the vector $\mathbf{x}$ that is most aligned (largest dot product) after adding $\log w(\mathbf{x})$.  
Our first objective is to derive bounds on the weighted Rademacher complexity in terms of the sum $Z(w)$.

\begin{figure}[ht] 
\includegraphics[width=8cm]{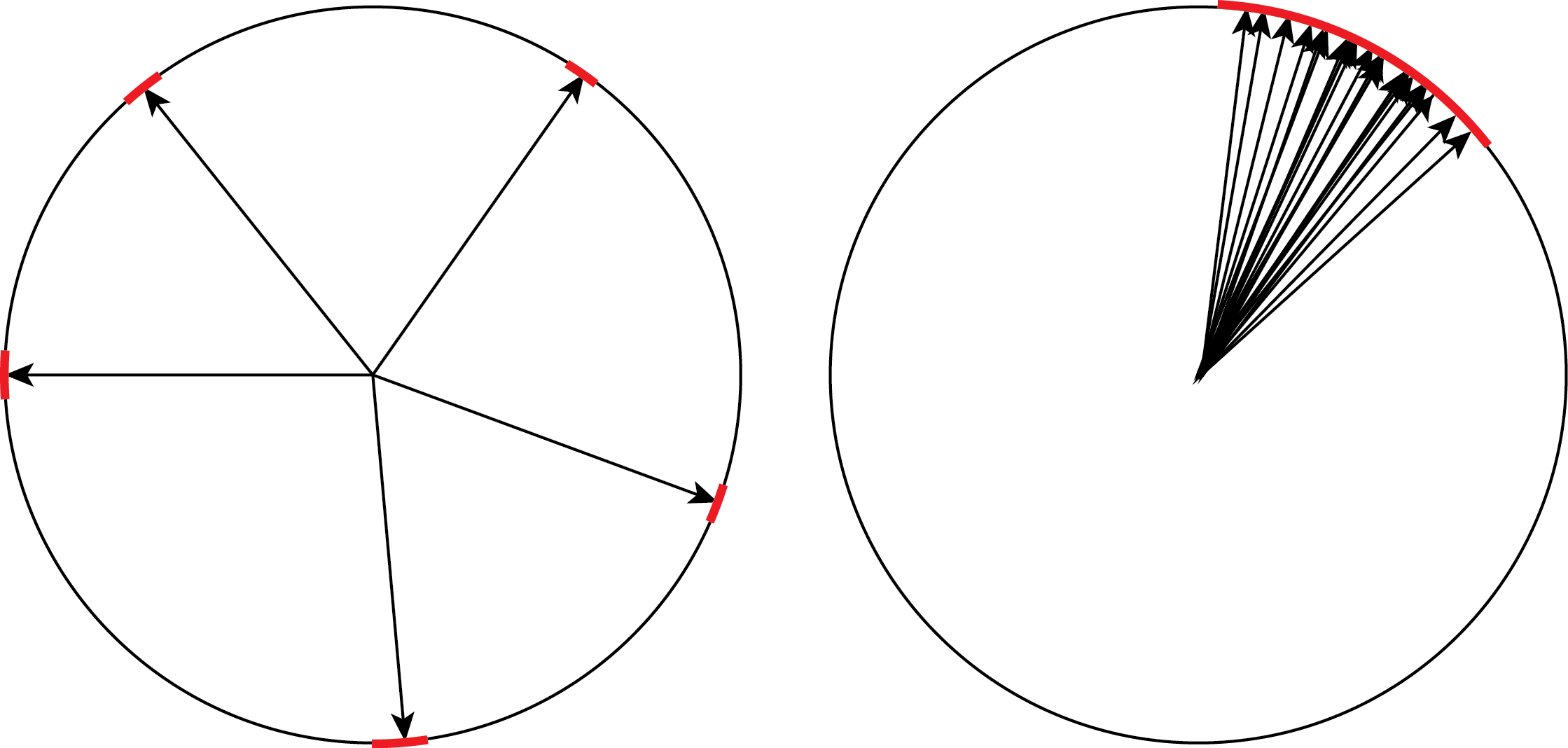}
\caption{Illustration mapping a set of vectors in high dimensional space $\{-1,1\}^n$ to the unit circle.  Red regions correspond to regions of space that have a large dot product with some vector in the set.  Left: when the size of a set is small, very few regions have a large dot product with any vector in the set, so the Rademacher complexity will be small.  Right: when a \emph{large set} of vectors is tightly packed in a \emph{small region} of space, the Rademacher complexity will remain relatively small.  
In both left and right figures we have similar (small) Rademacher complexities, yet different set sizes. This illustrates why tight bounds on the set size based on Rademacher complexity are difficult to achieve. 
} \label{bound_rademacher_vectors}
\end{figure}

We begin with the observation that it is impossible to derive bounds on the Rademacher complexity in terms of set size that are \emph{tight for sets of all shapes}.  To gain intuition, note that in high dimensional spaces the dot product of any particular vector and another chosen uniformly at random from $\{-1,1\}^n$ is close to 0 with high probability.  
The distribution of weight vectors throughout the space may take any geometric form.  One extreme configuration is that all vectors with large weights are packed tightly together, forming a Hamming ball.  At the other extreme, all vectors with large weights could be distributed uniformly through the space.  
As Figure \ref{bound_rademacher_vectors} illustrates, a \emph{large} set of tightly packed vectors and a \emph{small} set of well-distributed vectors will both have similar Rademacher complexity. Thus, bounds on Rademacher complexity that are based on the underlying set's size fundamentally cannot always be tight for all distributions. Nevertheless, the lower and upper bounds we derive next are tight enough to be useful in practice.

\subsubsection{Lower bound.}

To lower bound the weighted Rademacher complexity we adapt the technique of \cite{barvinok1997approximate} for lower bounding the standard Rademacher complexity.  The high level idea is that the space $\{-1,1\}^n$ can be mapped to the leaves of a binary tree. By following a path from the root to a leaf, we are dividing the space in half $n$ times, until we arrive at a leaf which corresponds to a single element (with some fixed weight).  By judiciously choosing which half of the space (branch of the tree) to recurse into at each step we derive the bound in Lemma \ref{lower_bound_weighted_rademacher}, whose proof is given in the appendix.  

\begin{lemma} \label{lower_bound_weighted_rademacher}
For any $\beta \in (0,1/2)$, the weighted Rademacher complexity of a weight function $w:\{-1,1\}^n \rightarrow [0, \infty)$ is lower bounded by
\small
\begin{equation*}
\R(w) \geq  \log w^*(\beta) +  \frac{ n \log \left(1-\beta\right)  + \log  Z(w)-\log w^*(\beta)}{\log \left(\frac{1 - \beta}{\beta}\right) },
\end{equation*}
\normalsize
with
\small
\begin{align*}
w^*(\beta) =
\begin{cases}
    w_{max} = \max_\mathbf{x} w(\mathbf{x}), & \text{if } \beta \geq 1/3\\
    w_{min} = \min_\mathbf{x} \{w(\mathbf{x}) : w(\mathbf{x}) > 0\}, & \text{if } \beta \leq 1/3\\
\end{cases}.
\end{align*}
\normalsize
\end{lemma}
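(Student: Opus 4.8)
The plan is to produce, for each value of the random vector $\mathbf{c}$, a single point $\mathbf{x}(\mathbf{c})\in\{-1,1\}^n$ and then use the trivial bound
\[
\R(w)=\E_{\mathbf{c}}\Big[\max_{\mathbf{x}}\{\langle\mathbf{c},\mathbf{x}\rangle+\log w(\mathbf{x})\}\Big]\ \geq\ \E_{\mathbf{c}}\big[\langle\mathbf{c},\mathbf{x}(\mathbf{c})\rangle+\log w(\mathbf{x}(\mathbf{c}))\big].
\]
We may assume $Z(w)>0$, as otherwise the claimed bound is $-\infty$. Identify $\{-1,1\}^n$ with the leaves of a complete binary tree of depth $n$ whose branch at depth $i-1$ fixes coordinate $i$, and attach to each node $v$ the subcube $S_v$ and its total weight $W_v=\sum_{\mathbf{x}\in S_v}w(\mathbf{x})$. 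Construct $\mathbf{x}(\mathbf{c})$ by the following root-to-leaf walk: at a node $v$ of depth $i-1$, let $v'$ be its child lying on the side $c_i$; if $W_{v'}\geq\beta\,W_v$ move to $v'$ and set $x_i=c_i$, and otherwise move to the other child (whose weight then exceeds $(1-\beta)W_v$) and set $x_i=-c_i$. Because $\beta<1/2$, at every step the child we descend into has weight at least $\beta$ times its parent's, so all $W_v$ along the path are positive; in particular $w(\mathbf{x}(\mathbf{c}))>0$, hence $w_{min}\leq w(\mathbf{x}(\mathbf{c}))\leq w_{max}$.

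Next I would classify the $n$ steps. Call the depth-$(i-1)$ node reached by the walk \emph{balanced} if both of its children have weight at least $\beta$ times its own, and \emph{unbalanced} otherwise; this node, and whether it is balanced, is a deterministic function of $c_1,\dots,c_{i-1}$. At a balanced node the walk descends into the $c_i$-child, contributing $c_ix_i=+1$ and multiplying the weight by a factor in $[\beta,1-\beta]$. At an unbalanced node one child has weight $>(1-\beta)W_v$, and one checks that the walk descends into \emph{that} (heavier) child whatever $c_i$ is, so the step is $\mathbf{c}$-independent as a move in the tree, its weight factor exceeds $1-\beta$, and its contribution $c_ix_i$ equals $+1$ or $-1$ with probability $\tfrac12$ each. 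Let $B=B(\mathbf{c})$ be the number of balanced nodes on the walk. Two facts follow: (i) $\E_{\mathbf{c}}[\langle\mathbf{c},\mathbf{x}(\mathbf{c})\rangle]=\E_{\mathbf{c}}[B]$, because after conditioning on $c_1,\dots,c_{i-1}$ the $i$-th contribution equals $1$ when the current node is balanced and has conditional mean $0$ when it is unbalanced (being $\pm1$ times a move direction that does not depend on $c_i$); and (ii) pointwise $w(\mathbf{x}(\mathbf{c}))\geq Z(w)\,\beta^{B}(1-\beta)^{\,n-B}$, by multiplying the per-step weight factors.

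Writing $\bar B\coloneqq\E_{\mathbf{c}}[B]\in[0,n]$ and combining (i), (ii), the pointwise bound $w(\mathbf{x}(\mathbf{c}))\ge w_{min}$, and linearity of expectation, I would get
\[
\R(w)\ \geq\ \bar B+\max\!\Big(\log w_{min},\ \log Z(w)+\bar B\log\beta+(n-\bar B)\log(1-\beta)\Big),
\]
and I denote the right-hand side by $\Phi(\bar B)$. Taking logarithms in (ii) together with $w(\mathbf{x}(\mathbf{c}))\le w_{max}$ also gives the feasibility constraint $\bar B\geq b_{max}\coloneqq\big(n\log(1-\beta)+\log Z(w)-\log w_{max}\big)\big/\log\tfrac{1-\beta}{\beta}$. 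It then remains to minimize the piecewise-linear function $\Phi$ over the feasible range. It has a single breakpoint at $b_0\coloneqq\big(n\log(1-\beta)+\log Z(w)-\log w_{min}\big)\big/\log\tfrac{1-\beta}{\beta}$, namely the value at which the bound in (ii) equals $w_{min}$; its left piece ($b\le b_0$) has slope $1-\log\tfrac{1-\beta}{\beta}$ and its right piece ($b\ge b_0$) has slope $1$. When $\beta\geq1/3$ the left slope is nonnegative, so $\Phi$ is nondecreasing; since $w_{max}\ge w_{min}$ forces $b_{max}\le b_0$, this yields $\R(w)\ge\Phi(\bar B)\ge\Phi(b_{max})=\log w_{max}+b_{max}$, which is the claimed bound with $w^*(\beta)=w_{max}$. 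When $\beta\leq1/3$ the left slope is negative, so $\Phi$ is minimized over $[0,n]$ at $b_0$, giving $\R(w)\ge\Phi(b_0)=\log w_{min}+b_0$, the claimed bound with $w^*(\beta)=w_{min}$. Degenerate cases in which $b_0$ or $b_{max}$ lies outside $[0,n]$ only strengthen the inequality and are handled directly.

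The step I expect to be the main obstacle is fact (i) — making rigorous the claim that at an unbalanced node the walk genuinely ignores the local sign $c_i$, so that the many $\pm1$ contributions accumulated at such nodes cancel in expectation and leave exactly $\E[B]$. The remainder is routine bookkeeping of the weight factors for (ii) and checking that the stated bound really is the minimum of $\Phi$ on the feasible interval in each regime; in particular, $1/3$ is exactly where the slope $1-\log\tfrac{1-\beta}{\beta}$ (recall $\log=\log_2$) changes sign, which is why the lemma uses $w_{max}$ above that threshold and $w_{min}$ below it.
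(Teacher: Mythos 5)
Your proof is correct, and while it follows the same Barvinok-style blueprint as the paper's argument --- a root-to-leaf descent of the binary tree over $\{-1,1\}^n$ with the same $\beta$-threshold dichotomy between ``balanced'' and ``unbalanced'' splits, the same weight bookkeeping $w(\text{leaf})\ge\beta^{m}(1-\beta)^{n-m}Z(w)$, and the same endgame of replacing the unknown leaf weight by $w_{\min}$ or $w_{\max}$ with the crossover at $\beta=1/3$ --- the mechanism by which you extract the ``$+1$ per balanced split'' is genuinely different. The paper descends \emph{deterministically} (independently of $\mathbf{c}$), at balanced splits choosing the child whose restricted weight function has the smaller weighted Rademacher complexity, and relies on two auxiliary recursive lemmas (Lemmas \ref{recursive_Delta_1} and \ref{recursive_Delta_2}) showing that $\R(w_{j-1}^{\pm})\le\R(w_j)$ always and that the average of the two children's complexities is at most $\R(w_j)-1$; the count $m$ of balanced splits is then a fixed integer. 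You instead descend along a $\mathbf{c}$-\emph{dependent} path, lower-bound the max by its value at the resulting witness leaf $\mathbf{x}(\mathbf{c})$, and obtain the gain from the identity $\E[\langle\mathbf{c},\mathbf{x}(\mathbf{c})\rangle]=\E[B]$, which is rigorous exactly as you suspect it needs to be: conditional on $c_1,\dots,c_{i-1}$ the current node, its balanced/unbalanced status, and (if unbalanced) which child is heavy are all determined, so the unbalanced contributions $c_ix_i$ have conditional mean zero while the balanced ones equal $1$. This is arguably more elementary --- it avoids defining the restricted functions $w_j^{\pm}$ and their Rademacher complexities entirely --- at the mild cost that $B$ is now random, so you must pass through $\E[B]$ and the piecewise-linear minimization of $\Phi$ over the feasible interval (including the pointwise constraint $B\ge b_{\max}$ coming from $w(\mathbf{x}(\mathbf{c}))\le w_{\max}$), whereas the paper works with a single deterministic $m$. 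Your case analysis of $\Phi$ reproduces the paper's bound exactly in both regimes, and your handling of the degenerate endpoints is sound since $\Phi(\bar B)$ is always at least the global infimum of $\Phi$.
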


\subsubsection{Upper bound.}
In the unweighted setting, a standard upper bound on the Rademacher complexity is used in learning theory to show that the Rademacher complexity of a small hypothesis class is also small, often to prove PAC-learnability.  Massart's Lemma (see \cite{shalev2014understanding}, lemma 26.8) formally upper bounds the Rademacher complexity in terms of the size of the set.  This result is intuitive since, as we have noted, the dot product between any one vector $\mathbf{x} \in \{-1,1\}^n$ is small with most other vectors $\mathbf{c} \in \{-1,1\}^n$.  Therefore, if the set is small the Rademacher complexity must also be small.  

Adapting the proof technique of Massart's Lemma to the weighted setting we arrive at
the following bound:

\begin{lemma} \label{upper_bound_weighted_rademacher}
For any $\lambda > 0$, $\gamma > 0$, and weight functions $w, w^\gamma:\{-1,1\}^n \rightarrow [0, \infty)$ with $w^\gamma(\mathbf{x}) = w(\mathbf{x})^\gamma$, the weighted Rademacher complexity of $w^\gamma$ is upper bounded by 
\begin{equation} 
\R(w^\gamma) \leq \frac{1}{\lambda} \log Z(w) +\frac{\lambda \gamma - 1}{\lambda} \log  w^*(\lambda, \gamma) +\lambda \frac{ n}{2},\\
\end{equation}
with
\small
\begin{align*} \label{w_star}
w^*(\lambda, \gamma) =
\begin{cases}
    w_{max} = \max_\mathbf{x} w(\mathbf{x}), & \text{if } \lambda \gamma \geq 1\\
    w_{min} = \min_\mathbf{x} \{w(\mathbf{x}) : w(\mathbf{x}) > 0\}, & \text{if } \lambda \gamma \leq 1\\
\end{cases}.
\end{align*}
\normalsize
\end{lemma}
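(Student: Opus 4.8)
The plan is to adapt the standard exponential-moment proof of Massart's Lemma, replacing the set cardinality $|A|$ that appears there by the perturbed, partition-like sum $\sum_{\mathbf{x}} w(\mathbf{x})^{\lambda\gamma}$, and then relating that sum back to $Z(w)$ through an elementary comparison with $w_{max}$ or $w_{min}$.

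First, fix $\lambda > 0$ and apply Jensen's inequality to the convex map $t \mapsto 2^{\lambda t}$ (recall that $\log$ means $\log_2$ in this paper), which gives
\[
2^{\lambda \R(w^\gamma)} = 2^{\lambda\, \E_{\mathbf{c}}[\max_{\mathbf{x}}\{\langle \mathbf{c},\mathbf{x}\rangle + \gamma \log w(\mathbf{x})\}]} \le \E_{\mathbf{c}}\Big[2^{\lambda\max_{\mathbf{x}}\{\langle \mathbf{c},\mathbf{x}\rangle + \gamma \log w(\mathbf{x})\}}\Big].
\]
Since $t \mapsto 2^{\lambda t}$ is increasing, the exponential commutes with the max, and I would then bound the max by a sum: $2^{\lambda\max_{\mathbf{x}}\{\cdots\}} = \max_{\mathbf{x}}\{2^{\lambda\langle \mathbf{c},\mathbf{x}\rangle}\, w(\mathbf{x})^{\lambda\gamma}\} \le \sum_{\mathbf{x}} 2^{\lambda\langle \mathbf{c},\mathbf{x}\rangle}\, w(\mathbf{x})^{\lambda\gamma}$. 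Taking the expectation and using linearity, everything reduces to computing $\E_{\mathbf{c}}[2^{\lambda\langle \mathbf{c},\mathbf{x}\rangle}]$ for fixed $\mathbf{x}$, which factorizes over coordinates: $\E_{\mathbf{c}}[2^{\lambda\langle \mathbf{c},\mathbf{x}\rangle}] = \prod_{i=1}^n \E_{c_i}[2^{\lambda c_i x_i}] = \left(\tfrac{2^\lambda + 2^{-\lambda}}{2}\right)^{n}$, using $x_i \in \{-1,1\}$.

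Next I would invoke the sub-Gaussian-type bound $\tfrac{2^\lambda + 2^{-\lambda}}{2} \le 2^{\lambda^2/2}$, which follows from $\cosh y = \sum_k y^{2k}/(2k)! \le \sum_k y^{2k}/(2^k k!) = e^{y^2/2}$ applied at $y = \lambda \ln 2$, together with $\ln 2 < 1$. This yields $2^{\lambda \R(w^\gamma)} \le 2^{\lambda^2 n/2}\sum_{\mathbf{x}} w(\mathbf{x})^{\lambda\gamma}$, so it only remains to control $\sum_{\mathbf{x}} w(\mathbf{x})^{\lambda\gamma}$ in terms of $Z(w) = \sum_{\mathbf{x}} w(\mathbf{x})$. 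Writing $w(\mathbf{x})^{\lambda\gamma} = w(\mathbf{x})^{\lambda\gamma-1} w(\mathbf{x})$ for $w(\mathbf{x}) > 0$ (the $w(\mathbf{x})=0$ terms vanish on both sides since $\lambda\gamma > 0$), and noting that $t \mapsto t^{\lambda\gamma-1}$ is nondecreasing when $\lambda\gamma \ge 1$ and nonincreasing when $\lambda\gamma \le 1$, I get $w(\mathbf{x})^{\lambda\gamma-1} \le w^*(\lambda,\gamma)^{\lambda\gamma-1}$ in both cases, hence $\sum_{\mathbf{x}} w(\mathbf{x})^{\lambda\gamma} \le w^*(\lambda,\gamma)^{\lambda\gamma-1} Z(w)$. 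Substituting, taking $\log_2$, and dividing by $\lambda$ produces exactly the claimed inequality, with the perturbation term coming out as $\lambda n/2$.

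I expect the only delicate points to be bookkeeping rather than conceptual: keeping the base-$2$ logarithm convention consistent so that the perturbation term is exactly $\lambda n/2$ (this is where the slightly lossy estimate $(\ln 2)^2 \le \ln 2 \le 1$ enters), and correctly handling the case split on $\lambda\gamma$ together with the $w(\mathbf{x}) = 0$ entries when passing from $\sum_{\mathbf{x}} w(\mathbf{x})^{\lambda\gamma}$ to $Z(w)$. That last comparison is the genuinely new ingredient relative to Massart's Lemma, where the analogous quantity is just the integer $|A|$ and no such weighting argument is needed.
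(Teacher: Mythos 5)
Your proposal is correct and follows essentially the same route as the paper's proof: Jensen's inequality applied to the exponential, bounding the max by a sum, factorizing $\E_{\mathbf{c}}[2^{\lambda\langle\mathbf{c},\mathbf{x}\rangle}]$ over coordinates, the sub-Gaussian bound $\tfrac{2^{\lambda}+2^{-\lambda}}{2}\le 2^{\lambda^{2}/2}$ (the paper cites Lemma A.6 of Shalev-Shwartz and Ben-David for this), and finally comparing $\sum_{\mathbf{x}} w(\mathbf{x})^{\lambda\gamma}$ to $Z(w)$ via the monotonicity of $t\mapsto t^{\lambda\gamma-1}$. The only cosmetic difference is that you apply Jensen before replacing the max by a sum while the paper does these two steps in the opposite order, and your explicit justification of the base-$2$ cosh bound via $(\ln 2)^{2}\le\ln 2$ is slightly more careful than the paper's.
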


Note that for an indicator weight function we recover the bound from Massart's Lemma by setting $\lambda = \sqrt{\frac{2 \log Z(w)}{n}}$ and $\gamma=1$.  

\begin{corollary} \label{lb_trivial}
For sufficiently large $\gamma$ and
\[
\lambda = \sqrt{\frac{2 \log{\frac{Z(w)}{w_{max}}}}{n}},
\]
we recover the bound $w_{max} \leq Z(w)$ from Lemma \ref{upper_bound_weighted_rademacher}.
\end{corollary}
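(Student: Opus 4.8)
The plan is to sandwich the weighted Rademacher complexity $\R(w^\gamma)$ between an elementary lower bound and the upper bound of Lemma~\ref{upper_bound_weighted_rademacher}, then observe that the terms depending on $\gamma$ cancel, leaving exactly the claimed inequality once the stated value of $\lambda$ is substituted.

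First I would establish the lower bound $\R(w^\gamma) \geq \gamma \log w_{max}$. Let $\mathbf{x}^\star \in \argmax_{\mathbf{x}} w(\mathbf{x})$, so $w(\mathbf{x}^\star) = w_{max}$. Since the maximum in Definition~\ref{weighted_rademacher} is at least the value attained at the fixed point $\mathbf{x}^\star$, we get $\R(w^\gamma) \geq \E_{\mathbf{c}}\left[\langle \mathbf{c}, \mathbf{x}^\star\rangle + \gamma \log w_{max}\right] = \gamma \log w_{max}$, using $\E_{\mathbf{c}}[\langle \mathbf{c}, \mathbf{x}^\star\rangle] = \sum_i x^\star_i\,\E[c_i] = 0$.

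Next, take $\gamma$ large enough that $\lambda\gamma \geq 1$ (i.e.\ $\gamma \geq 1/\lambda$, the ``sufficiently large'' condition), so that $w^*(\lambda,\gamma) = w_{max}$ in Lemma~\ref{upper_bound_weighted_rademacher}. Combining that upper bound with the lower bound above yields
\[
\gamma \log w_{max} \leq \frac{1}{\lambda} \log Z(w) + \frac{\lambda\gamma - 1}{\lambda} \log w_{max} + \lambda \frac{n}{2}.
\]
The coefficient of $\log w_{max}$ on the right is $\gamma - \frac{1}{\lambda}$, so subtracting $\left(\gamma - \frac{1}{\lambda}\right)\log w_{max}$ from both sides and multiplying through by $\lambda > 0$ gives $\log w_{max} \leq \log Z(w) + \lambda^2 n/2$. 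Substituting $\lambda^2 = \frac{2\log(Z(w)/w_{max})}{n}$ makes $\lambda^2 n/2 = \log Z(w) - \log w_{max}$, so the inequality collapses to $2\log w_{max} \leq 2\log Z(w)$, i.e.\ $w_{max} \leq Z(w)$.

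I do not expect a real obstacle here: the only insight needed is to pair Lemma~\ref{upper_bound_weighted_rademacher} with the trivial lower bound $\R(w^\gamma) \geq \gamma \log w_{max}$, after which the $\gamma$-dependent terms cancel exactly and the remaining algebra is routine. The degenerate case $Z(w) = w_{max}$ (where $\lambda = 0$) makes the asserted bound an equality and can simply be excluded or checked directly.
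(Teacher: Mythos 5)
Your proof is correct and follows essentially the same route as the paper's own: combine the upper bound of Lemma~\ref{upper_bound_weighted_rademacher} with $\gamma$ large enough that $w^*(\lambda,\gamma)=w_{max}$, cancel the $\gamma$-dependent terms against a trivial lower bound on $\R(w^\gamma)$, and substitute the stated $\lambda$ so that $\lambda^2 n/2 = \log(Z(w)/w_{max})$. The only minor (and arguably cleaner) difference is that you use the one-sided bound $\R(w^\gamma)\geq\gamma\log w_{max}$, valid for every $\gamma$, whereas the paper argues that the $w_{max}$-normalized expectation is exactly zero for sufficiently large $\gamma$ (which additionally invokes uniqueness of the maximizer); only the inequality direction is actually needed, and both arguments share the same implicit caveat that $\lambda$ is real only when $Z(w)\geq w_{max}$, which is why the corollary is billed as merely ``recovering'' the trivial bound.
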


Lemma \ref{upper_bound_weighted_rademacher} holds for any $\lambda > 0$ and $\gamma  > 0$.  In general we set $\gamma=1$ and optimize over $\lambda$ to make the bound as tight as possible, comparing the result with the trivial bound given by Corollary \ref{lb_trivial}.  More sophisticated optimization strategies over $\lambda$ and $\gamma$ could result in a tighter bound.  Please see the appendix for further details and proofs.

\subsection{Bounding the Weighted Sum $Z(w)$}
With our bounds on the weighted Rademacher complexity from the previous section, we now present our method for efficiently bounding the sum $Z(w)$.  Proposition \ref{weighted_rademacher_slack} states that we can estimate the weighted Rademacher complexity using the optimization oracle of assumption \ref{opt_oracle}.

\begin{prop} \label{weighted_rademacher_slack}
For $\mathbf{c} \in \{-1,1\}^n$ sampled uniformly at random, the bound 
\begin{equation}
\R(w) - \sqrt{6 n} \leq \delta(\mathbf{c}, w) \leq \R(w) + \sqrt{6 n}
\end{equation}
holds with probability greater than .95.
\end{prop}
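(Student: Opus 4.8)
The plan is to apply McDiarmid's inequality (Proposition~\ref{mcdiarmid}) to the single-sample estimator $\delta(\mathbf{c}, w)$, viewed as a function of the $n$ independent coordinates $c_1, \dots, c_n$ of $\mathbf{c}$. Since $\R(w) = \E_{\mathbf{c}}[\delta(\mathbf{c}, w)]$ by Definition~\ref{weighted_rademacher}, a concentration bound on $\delta(\mathbf{c}, w)$ around its mean is exactly what we need, and the probability statement in Proposition~\ref{weighted_rademacher_slack} is the two-sided tail bound.

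\textbf{Step 1: Bounded differences.} First I would verify that $f(c_1, \dots, c_n) := \delta(\mathbf{c}, w) = \max_{\mathbf{x}} \{\langle \mathbf{c}, \mathbf{x}\rangle + \log w(\mathbf{x})\}$ satisfies the bounded-differences condition with $d_i = 2$ for each $i$. Fix all coordinates except $c_i$, and consider flipping $c_i$ to $c_i'$. For any fixed $\mathbf{x}$, the quantity $\langle \mathbf{c}, \mathbf{x}\rangle + \log w(\mathbf{x})$ changes by $(c_i - c_i') x_i$, whose absolute value is at most $|c_i - c_i'| \cdot |x_i| = 2 \cdot 1 = 2$. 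Since the pointwise maximum of functions that each move by at most $2$ also moves by at most $2$ (a standard fact: $|\max_\mathbf{x} g(\mathbf{x}) - \max_\mathbf{x} h(\mathbf{x})| \le \max_\mathbf{x} |g(\mathbf{x}) - h(\mathbf{x})|$), we get $|f(\dots, c_i, \dots) - f(\dots, c_i', \dots)| \le 2$, so $d_i = 2$.

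\textbf{Step 2: Apply McDiarmid.} With $m = n$ and $d_i = 2$, we have $\sum_j d_j^2 = 4n$. Proposition~\ref{mcdiarmid} then gives, for any $\epsilon > 0$,
\[
\Pr\Big[ |\delta(\mathbf{c}, w) - \R(w)| \ge \epsilon \Big] \le \exp\left( \frac{-2\epsilon^2}{4n} \right) = \exp\left( \frac{-\epsilon^2}{2n} \right).
\]
\textbf{Step 3: Choose $\epsilon$.} Setting $\epsilon = \sqrt{6n}$ yields a failure probability of at most $\exp(-3) \approx 0.0498 < 0.05$, so the two-sided bound $\R(w) - \sqrt{6n} \le \delta(\mathbf{c}, w) \le \R(w) + \sqrt{6n}$ holds with probability greater than $0.95$, as claimed.

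\textbf{Main obstacle.} There is no real obstacle here; the only point requiring a moment's care is the bounded-differences verification in Step~1 — specifically the observation that the $\max$ over $\mathbf{x}$ is $2$-Lipschitz in each coordinate $c_i$ because each term in the max is, and that a flip of a $\pm 1$ coordinate induces a change of magnitude $2$ (not $1$), which is what makes $\sum_j d_j^2 = 4n$ rather than $n$ and hence forces the constant $\sqrt{6n}$ (rather than something smaller) to clear the $0.95$ threshold. Everything else is a direct substitution into McDiarmid's inequality.
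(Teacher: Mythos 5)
Your proposal is correct and follows the same route as the paper: apply McDiarmid's inequality to $f(\mathbf{c}) = \delta(\mathbf{c}, w)$ with bounded differences $d_i = 2$, giving $\sum_j d_j^2 = 4n$ and a failure probability of $e^{-3} < 0.05$ at $\epsilon = \sqrt{6n}$. Your Step 1 actually spells out the Lipschitz verification that the paper only asserts, which is a welcome addition.
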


\begin{proof}
By applying Proposition \ref{mcdiarmid} to the function $f_w(\mathbf{c}) = \delta(\mathbf{c}, w)$, and noting the constant $d_i = 2$, we have
\begin{align*}
P\left[|\delta(\mathbf{c}, w) - \R(w)| \geq \sqrt{6 n} \right] \leq e^{-3} \leq .05.
\end{align*}
This finishes the proof. 
\end{proof}

To bound $Z(w)$ we use our optimization oracle to solve a perturbed optimization problem, giving an estimate of the weighted Rademacher complexity, $\R(w)$. Next we invert the bounds on $\R(w)$ (Lemmas \ref{lower_bound_weighted_rademacher} and \ref{upper_bound_weighted_rademacher}) to obtain bounds on $Z(w)$.  We optimize the parameters $\lambda$ and $\beta$ (from equations \ref{lower_bound_weighted_rademacher} and \ref{upper_bound_weighted_rademacher}) to make the bounds as tight as possible.  By applying our optimization oracle repeatedly, we can reduce the slack introduced in our final bound when estimating $\R(w)$ (by Lemma \ref{weighted_rademacher_slack}) and arrive at our bounds on the sum $Z(w)$, stated in the following theorem.

\begin{theorem}\label{main_result}  With probability at least $0.95$, the sum $Z(w)= \sum_{\mathbf{x} \in \{-1,1\}^n} w(\mathbf{x})$ of any weight function $w:\{-1,1\}^n \rightarrow [0, \infty)$ is bounded by the outputs of  algorithms \ref{lower_alg} and \ref{upper_alg} as
\[
\psi_{LB} < \log Z(w)< \psi_{UB}.
\]
\end{theorem}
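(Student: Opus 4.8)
The plan is to chain together three ingredients: (i) the high-probability closeness of the empirical estimate $\bar{\delta}_k(w)$ from Algorithm~\ref{est_alg} to $\R(w)$; (ii) the lower bound on $\R(w)$ in terms of $Z(w)$ from Lemma~\ref{lower_bound_weighted_rademacher}, which after inversion yields the upper bound $\psi_{UB}$; and (iii) the upper bound on $\R(w)$ in terms of $Z(w)$ from Lemma~\ref{upper_bound_weighted_rademacher} (used with $\gamma = 1$), which after inversion yields the lower bound $\psi_{LB}$.

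First I would sharpen Proposition~\ref{weighted_rademacher_slack} to the $k$-sample estimator. Viewing $\bar{\delta}_k(w)$ as a function of the $nk$ independent Rademacher coordinates making up $\mathbf{c}_1, \dots, \mathbf{c}_k$, flipping any single coordinate changes $\bar{\delta}_k(w)$ by at most $2/k$, so McDiarmid's inequality (Proposition~\ref{mcdiarmid}) gives $\Pr[\,|\bar{\delta}_k(w) - \R(w)| \geq \epsilon_k\,] \leq \exp(-\epsilon_k^2 k / (2n))$; choosing $\epsilon_k = \sqrt{2 n \ln(20) / k}$ makes the right-hand side at most $0.05$ (and recovers the $\sqrt{6n}$ slack of Proposition~\ref{weighted_rademacher_slack} when $k = 1$). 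I then condition on the complementary event $E = \{\,|\bar{\delta}_k(w) - \R(w)| \leq \epsilon_k\,\}$, which has probability at least $0.95$; on $E$ we simultaneously have $\R(w) \leq \bar{\delta}_k(w) + \epsilon_k$ and $\R(w) \geq \bar{\delta}_k(w) - \epsilon_k$, and everything below is deterministic given $E$.

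Next I would invert the two lemmas on the event $E$. For the upper bound: in Lemma~\ref{lower_bound_weighted_rademacher} the right-hand side is, for fixed $\beta \in (0, 1/2)$, an increasing affine function of $\log Z(w)$ because $\log((1-\beta)/\beta) > 0$; solving the inequality $\R(w) \geq \log w^*(\beta) + \frac{n \log(1-\beta) + \log Z(w) - \log w^*(\beta)}{\log((1-\beta)/\beta)}$ for $\log Z(w)$ and substituting $\R(w) \leq \bar{\delta}_k(w) + \epsilon_k$ gives, for every admissible $\beta$ and its corresponding $w^*(\beta)$, an explicit upper bound on $\log Z(w)$; minimizing over $\beta$ defines $\psi_{UB}$, the output of Algorithm~\ref{upper_alg}. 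Symmetrically, Lemma~\ref{upper_bound_weighted_rademacher} with $\gamma = 1$ reads $\R(w) \leq \frac{1}{\lambda}\log Z(w) + \frac{\lambda - 1}{\lambda}\log w^*(\lambda, 1) + \frac{\lambda n}{2}$; since $\lambda > 0$, solving for $\log Z(w)$ and substituting $\R(w) \geq \bar{\delta}_k(w) - \epsilon_k$ gives, for every $\lambda > 0$, a lower bound on $\log Z(w)$; maximizing over $\lambda$ and over the trivial bound $\log Z(w) \geq \log w_{max}$ of Corollary~\ref{lb_trivial} defines $\psi_{LB}$, the output of Algorithm~\ref{lower_alg}. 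Because both inversions are carried out on the same event $E$, the two inequalities $\psi_{LB} < \log Z(w) < \psi_{UB}$ hold together with probability at least $0.95$.

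The main obstacle is bookkeeping rather than a deep step: one must (a) verify the sign/monotonicity conditions that make each inversion valid, namely that $\log((1-\beta)/\beta)$ and $\lambda$ are positive so that dividing through preserves the inequality direction; (b) handle the piecewise definition of $w^*$, which requires the algorithms to also have access to, or to separately bound, $w_{max}$ and $w_{min}$; and (c) perform the probability accounting so that a \emph{single} good event $E$ — not a union bound over two separate events — underwrites both sides, keeping the confidence at $0.95$ rather than degrading it. The one-dimensional optimizations over $\lambda$ and $\beta$ need not be solved in closed form: the bounds hold for every admissible value, so the algorithms may simply search numerically for the tightest choice.
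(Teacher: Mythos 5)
Your proposal is correct and follows the same overall strategy as the paper's proof: concentrate $\bar{\delta}_k(w)$ around $\R(w)$ via McDiarmid, then invert Lemma~\ref{lower_bound_weighted_rademacher} (optimizing over $\beta$) to get $\psi_{UB}$ and Lemma~\ref{upper_bound_weighted_rademacher} with $\gamma=1$ (optimizing over $\lambda$) to get $\psi_{LB}$, with both inversions conditioned on the single two-sided concentration event so the $0.95$ confidence is not degraded by a union bound. The one place you take a genuinely different route is the $k$-sample slack: you apply McDiarmid directly to $\bar{\delta}_k(w)$ as a function of $nk$ Rademacher coordinates with bounded differences $2/k$, giving $\sum_j d_j^2 = 4n/k$ and hence the $\sqrt{6n/k}$ slack immediately. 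The paper instead constructs a product weight function $w':\{-1,1\}^{nk}\rightarrow[0,\infty)$ with $Z(w')=Z(w)^k$ and $\delta(\mathbf{c}',w')=k\,\bar{\delta}_k(w)$, and applies the single-sample bounds to $w'$ in dimension $nk$. The two derivations yield identical constants; yours is more elementary and keeps the concentration argument cleanly separated from the $\R$-versus-$Z$ bounds, while the paper's trick verifies in one pass that every term in the bounds (including $w_{\min}$, $w_{\max}$, and the $\lambda n/2$ and $n\log(1-\beta)$ terms) scales correctly under the substitution $n \mapsto nk$. One small caveat: the theorem as stated refers to the specific closed-form choices of $\lambda$ and $\beta_{opt}$ hard-coded in Algorithms~\ref{lower_alg} and~\ref{upper_alg}, so a complete write-up must check those particular values are admissible (e.g., the computed $\lambda$ is positive, $\beta_{opt}\in(0,1/2]$); the paper handles the degenerate cases ($\lambda<0$, $a\le 0$) by resampling $\mathbf{c}$, a detail your sketch subsumes under "bookkeeping" but which deserves explicit treatment.
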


\begin{algorithm}[htb]
  \caption{Rademacher Lower Bound for $\log Z(w)$} \label{lower_alg}
  \textbf{Inputs:} The estimator $\bar{\delta}_k(w)$ output by algorithm \ref{est_alg}, $k$ used to compute $\bar{\delta}_k(w)$, and optionally $w_{min}$ and $w_{max}$.
  
  \textbf{Output:} A number $\psi_{LB}$ which lower bounds $\log Z(w)$.
    \begin{enumerate}
      \item
      If $\log w_{min}$ was provided as input, calculate 
	\[
      \lambda = \frac{\bar{\delta}_k(w) - \sqrt{\frac{6 n}{k}} - \log w_{min}}{n}
	\]
    
	  \item
      If $\log w_{min}$ was provided as input and $\lambda \leq 1$,
      \[
		\psi_{LB} = \frac{(\bar{\delta}_k(w) - \sqrt{\frac{6 n}{k}} - \log w_{min})^2}{2n} + \log w_{min}.
	  \]
    
	  \item
      Otherwise,
      \[
		\psi_{LB} = \bar{\delta}_k(w) - \sqrt{\frac{6 n}{k}} - \frac{n}{2}.
	  \]
      
      \item Output the lower bound $\max\{\psi_{LB}, \log w_{max}\}$. 
    \end{enumerate}
	  
\end{algorithm}

\begin{algorithm}[htb]
  \caption{Rademacher Upper Bound for $\log Z(w)$} \label{upper_alg}
  \textbf{Inputs:} The estimator $\bar{\delta}_k(w)$, $k$ used to compute $\bar{\delta}_k(w)$, and optionally $w_{min}$ and $w_{max}$.
  
  \textbf{Output:} A number $\psi_{UB}$ which upper bounds $\log Z(w)$.
  
  \begin{enumerate}
      \item If $w_{min}$ was provided as input, calculate 
      \[
    \beta_{min} = \frac{\bar{\delta}_k(w) + \sqrt{\frac{6 n}{k}} - \log w_{min}}{n}.
    \]      
      \item If $w_{max}$ was provided as input, calculate 
      \[
    \beta_{max} = \frac{\bar{\delta}_k(w) + \sqrt{\frac{6 n}{k}} - \log w_{max}}{n}.
    \]
      \item Set the value
\begin{align*}
\beta_{opt} = 
\begin{cases}
  \beta_{min}, & \text{if } 0 < \beta_{min} < \frac{1}{3} \\  
  \beta_{max}, & \text{if } \frac{1}{3} < \beta_{max} < \frac{1}{2} \\  
  \frac{1}{2}, & \text{if } \frac{1}{2} < \beta_{max} \\ 
  \frac{1}{3}, & \text{otherwise} \\
\end{cases}.
\end{align*}

    \item Output the upper bound $\psi_{UB}$:
      \begin{enumerate}
        \item If $\beta_{opt} = \frac{1}{3}$, $\psi_{UB} = \bar{\delta}_k(w) + \sqrt{\frac{6 n}{k}} + n \log \left( \frac{3}{2} \right)$.
      
        \item If $\beta_{opt} = \frac{1}{2}$, $\psi_{UB} =  n + \log w_{max}$.

        \item Otherwise,
        
        \footnotesize
        \begin{equation*}
        \psi_{UB} =  n \beta_{opt} \log  \left( \frac{1 - \beta_{opt}}{\beta_{opt}}\right) - n \log \left( 1 - \beta_{opt} \right) + \log w^*,
        \end{equation*}
        \normalsize
        
        where $w^* =
    	\begin{cases}
  			w_{min}, & \text{if } \beta_{opt} < \frac{1}{3} \\  
  			w_{max}, & \text{if } \beta_{opt} > \frac{1}{3} \\  
     	\end{cases}$.
      \end{enumerate}      
  \end{enumerate}
\end{algorithm}

\section{Experiments}

The closest line of work to this paper showed that the partition function can be bounded by solving an optimization problem perturbed by Gumbel random variables \cite{hazan2012partition,hazan2013sampling,hazan2016high,kim2016exact,pmlr-v70-balog17a}.  This approach is based on the fact that 
\[
\ln Z(w) = \E_\gamma \left[ \max_{\mathbf{x} \in \{-1,1\}^n} \left\{ \ln w(\mathbf{x}) + \gamma(\mathbf{x}) \right\} \right],
\]
where \emph{all} $2^n$ random variables $\gamma(\mathbf{x})$ are sampled from the Gumbel distribution with scale $1$ and shifted by the Euler-Mascheroni constant to have mean 0.  Perturbing all $2^n$ states with IID Gumbel random variables is intractable, leading the authors to bound $\ln Z(w)$ by perturbing states with a combination of low dimensional Gumbel perturbations.  Specifically the upper bound
\small
\[
\ln Z(w) \leq \Theta_{UB} = \E_\gamma \left[ \max_{\mathbf{x} \in \{-1,1\}^n} \left\{ \ln w(\mathbf{x}) + \sum_{i=1}^n \gamma_i(x_i) \right\} \right]
\]
\normalsize
\cite{hazan2016high} and lower bound
\small
\[
\ln Z(w) \geq \Theta_{LB} = \E_\gamma \left[ \max_{\mathbf{x} \in \{-1,1\}^n} \left\{ \ln w(\mathbf{x}) + \sum_{i=1}^n \frac{1}{n}\gamma_i(x_i) \right\} \right]
\]
\normalsize
\cite[p.~6]{pmlr-v70-balog17a} hold in expectation, where $\gamma_i(x)$ for $i =1,\dots,n$ are sampled from the Gumbel distribution with scale $1$ and shifted by the Euler-Mascheroni constant to have mean 0. 

To obtain bounds that hold with high probability using Gumbel perturbations we calculate the slack term \cite[p.~32]{hazan2016high} 
\scriptsize
\begin{align*}
\epsilon_g = \min & \left\{ 2 \sqrt{n} \left( 1 + \sqrt{\frac{1}{2 k} \ln \frac{2}{\alpha}} \right)^2, \sqrt{n} \max \left\{\frac{4}{k}\ln \frac{2}{\alpha} , \sqrt{ \frac{32}{k} \ln \frac{2}{\alpha} } \right\} \right\} \\
\end{align*}
\normalsize
giving upper and lower bounds $\theta_{UB} = \Theta_{UB} + \epsilon_g$ and $\theta_{LB} = \Theta_{LB} - \frac{\epsilon_g}{n}$ that hold with probability $1-\alpha$ where $k$ samples are used to estimate the expectation bounds.

We note the Gumbel expectation upper bound takes nearly the same form as the weighted Rademacher complexity, with two differences.  The perturbation is sampled from a Gumbel distribution instead of a dot product with a vector of Rademacher random variables and, without scaling, the two bounds are naturally written in different log bases.

We experimentally compare our bounds with those obtained by Gumbel perturbations on two models.  First we bound the partition function of the spin glass model from \cite{hazan2016high}.  For this problem the weight function is given by the unnormalized probability distribution of the spin glass model.  Second we bound the propositional model counts (\#SAT) for a variety of SAT problems.  This problem falls into the unweighted category where every weight is either 0 or 1, specifically every satisfying assignment has weight 1 and we bound the total number of satisfying assignments.

\begin{figure}[ht] 
\includegraphics[width=8cm]{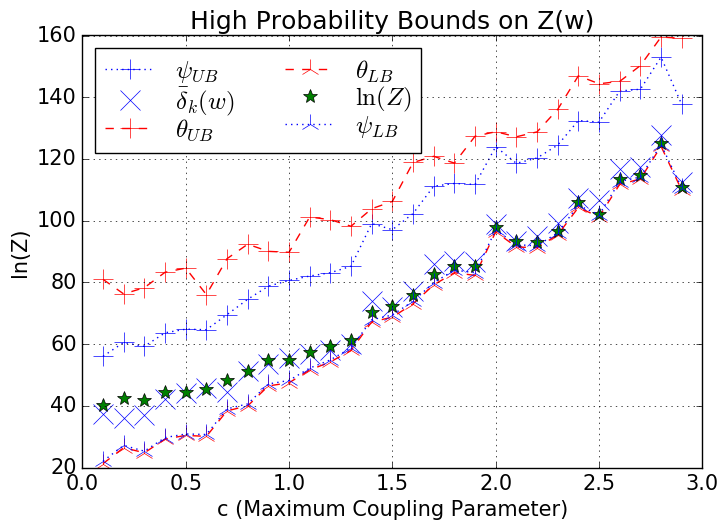}
\caption{Bounds for a 7x7 spin glass model with $k=5$ (for both methods), that hold with probability .95.  Our bounds and estimator are scaled to match Gumbel log base $e$ bounds.} \label{high_prob_ising} 
\end{figure}

\subsection{Spin Glass Model}
Following \cite{hazan2016high}, we bound the partition function of a spin glass model with variables $x_i \in \{-1,1\}$ for $i=1,2,\dots,n$, where each variable represents a spin.  Each spin has a local field parameter $\theta_i$ which corresponds to its local potential function $\theta_i(x_i) = \theta_i x_i$.  We performed experiments on grid shaped models where each spin variable has 4 neighbors, unless it occupies a grid edge.  Neighboring spins interact with coupling parameters $\theta_{i,j}(x_i, x_j) = \theta_{i,j} x_i x_j$.  The potential function of the spin glass model is
\[
\theta(x_1, x_2, \dots, x_n) = \sum_{i \in V} \theta_i x_i + \sum_{(i,j) \in E} \theta_{i,j} x_i x_j,
\]
with corresponding weight function
\[
w(\mathbf{x}) = \exp \left( \sum_{i \in V} \theta_i x_i + \sum_{(i,j) \in E} \theta_{i,j} x_i x_j \right).
\]
We compare our bounds on a 7x7 spin glass model.  We sampled the local field parameters $\theta_i$ uniformly at random from $[-1,1]$ and the coupling parameters uniformly at random from $[0,c)$ with $c$ varying.  Non-negative coupling parameters make it possible to perform MAP inference efficiently using the graph-cuts algorithm \cite{kolmogorov2004energy,greig1989exact}.  We used the python maxflow module wrapping the implementation from \citet{boykov2004experimental}.

Figure \ref{high_prob_ising} shows bounds that hold with probability .95, where all bounds are computed with $k=5$.  For this value of $k$, our approach produces tighter upper bounds than using Gumbel perturbations.  The crossover to a tighter Gumbel perturbation upper bound occurs around $k \approx 15$.  Lower bounds are equivalent, although we note it is trivial to recover this bound by simply calculating the largest weight over all states.

\begin{table*}[htb] 
\title{Test title}
\centering
    \small
    \begin{tabular}{@{}lcc|cc|cc|cc@{}}
    \toprule
Model Name & \#Variables& \#Clauses & ln(Z)& $\bar{\delta}_1(w)$ &  $\psi_{UB}$ &  $\theta_{UB}$ &  $\psi_{LB}$ &  $\theta_{LB}$ \\\midrule
     log-1             & 939                  & 3785                & 47.8                   & 64.5        (20.8)          & 438.0       (46.2)     & \textbf{426.5}        (43.0)       & \textbf{0.5}     (0.6)       & -0.3      (0.0)     \\ 
           log-2             & 1337                  & 24777                & 24.2                   & 48.6        (20.7)          & 485.7       (60.3)     & \textbf{464.0}        (45.1)       & \textbf{0.3}     (0.4)       & -0.3      (0.0)     \\ 
           log-3             & 1413                  & 29487                & 26.4                   & 49.9        (22.3)          & 503.9       (65.3)     & \textbf{478.2}        (42.3)       & \textbf{0.4}     (0.4)       & -0.3      (0.0)     \\ 
           log-4             & 2303                  & 20963                & 65.3                   & 106.0        (26.6)          & 830.2       (77.7)     & \textbf{676.9}        (58.8)       & \textbf{0.4}     (0.5)       & -0.2      (0.0)     \\ 
           tire-1             & 352                  & 1038                & 20.4                   & 30.7        (11.2)          & \textbf{198.5}      (17.6)      & 249.6        (23.7)       & \textbf{0.3}     (0.4)       & -0.5      (0.1)     \\ 
           tire-2             & 550                  & 2001                & 27.3                   & 42.1        (14.2)          & \textbf{283.9}      (27.7)      & 310.2        (29.6)       & \textbf{0.3}     (0.4)       & -0.4      (0.1)     \\ 
           tire-3             & 577                  & 2004                & 26.1                   & 36.9        (17.1)          & \textbf{280.5}      (36.1)      & 316.5        (29.1)       & \textbf{0.4}     (0.6)       & -0.4      (0.1)     \\ 
           tire-4             & 812                  & 3222                & 32.3                   & 55.0        (17.4)          & 384.7       (38.9)     & \textbf{383.3}        (35.3)       & \textbf{0.3}     (0.3)       & -0.3      (0.0)     \\ 
           ra             & 1236                  & 11416                & 659.2                   & 621.1        (15.5)          & \textbf{856.7}      (0.0)      & 1100.9        (45.7)       & \textbf{184.1}     (10.1)       & 0.3      (0.0)     \\ 
           rb             & 1854                  & 11324                & 855.9                   & 857.2        (12.6)          & \textbf{1285.1}      (0.0)      & 1387.5        (43.9)       & \textbf{239.3}     (7.7)       & 0.2      (0.0)     \\ 
           sat-grid-pbl-0010             & 110                  & 191                & 54.7                   & 51.6        (4.9)          & \textbf{76.2}      (0.0)      & 176.3        (13.6)       & \textbf{7.6}     (2.2)       & -0.5      (0.1)     \\ 
           sat-grid-pbl-0015             & 240                  & 436                & 125.4                   & 120.2        (6.6)          & \textbf{166.4}      (0.0)      & 310.3        (18.8)       & \textbf{26.6}     (3.8)       & -0.1      (0.1)     \\ 
           sat-grid-pbl-0020             & 420                  & 781                & 220.4                   & 215.5        (9.0)          & \textbf{291.1}      (0.0)      & 472.3        (26.5)       & \textbf{56.2}     (5.6)       & 0.0      (0.1)     \\ 
           sat-grid-pbl-0025             & 650                  & 1226                & 348.3                   & 338.8        (9.4)          & \textbf{450.5}      (0.0)      & 667.8        (33.1)       & \textbf{97.0}     (6.2)       & 0.2      (0.1)     \\ 
           sat-grid-pbl-0030             & 930                  & 1771                & 502.4                   & 482.6        (13.0)          & \textbf{644.6}      (0.0)      & 893.3        (36.7)       & \textbf{144.1}     (8.7)       & 0.2      (0.0)     \\ 
           c432             & 196                  & 514                & 25.0                   & 42.7        (5.8)          & \textbf{135.3}      (1.0)      & 212.6        (18.2)       & \textbf{1.4}     (0.8)       & -0.5      (0.1)     \\ 
           c499             & 243                  & 714                & 28.4                   & 58.5        (6.2)          & \textbf{168.2}      (0.4)      & 243.8        (16.6)       & \textbf{3.2}     (1.2)       & -0.4      (0.1)     \\ 
           c880             & 417                  & 1060                & 41.6                   & 83.1        (8.4)          & \textbf{281.1}      (4.7)      & 332.9        (21.3)       & \textbf{4.2}     (1.4)       & -0.3      (0.1)     \\ 
           c1355             & 555                  & 1546                & 28.4                   & 79.8        (12.2)          & \textbf{342.9}      (14.2)      & 368.6        (28.7)       & \textbf{2.3}     (1.3)       & -0.3      (0.1)     \\ 
           c1908             & 751                  & 2053                & 22.9                   & 87.8        (12.2)          & 427.7       (19.3)     & \textbf{419.1}        (32.8)       & \textbf{1.8}     (0.9)       & -0.3      (0.0)     \\ 
           c2670             & 1230                  & 2876                & 161.5                   & 260.0        (14.6)          & 812.8       (10.8)     & \textbf{701.4}        (39.6)       & \textbf{23.7}     (3.4)       & -0.1      (0.0)     \\ 
 \bottomrule
    \end{tabular}
\caption{Empirical comparison of our estimate of ($\bar{\delta}_1(w)$) and bounds on ($\psi$) propositional model counts against bounds based on Gumbel perturbations ($\theta$).  The mean over 100 runs is shown with the standard deviation in parentheses.  Bounds hold with probability .95 and $k=1$ for both methods.  Tighter bounds are in \textbf{bold}.  Meta column descriptions, left to right: model name and information, natural logarithm of ground truth model counts and our estimator, upper bounds, and lower bounds.} \label{high_prob_sat}
\end{table*}

\subsection{Propositional Model Counting}
Next we evaluate our method on the problem of propositional model counting.  Given a boolean formula $F$, this poses the question of how many assignments $\mathbf{x}$ to the underlying boolean variables result in $F$ evaluating to true.  Our weight function is given by $w(\mathbf{x}) = 1$ if $F(\mathbf{x})$ evaluates to true, and $0$ otherwise.

We performed MAP inference on the perturbed problem using the weighted partial MaxSAT solver MaxHS \cite{davies2013solving}.  Ground truth was obtained for a variety of models\footnote{The models used in our experiments can be downloaded from http://reasoning.cs.ucla.edu/c2d/results.html} using three exact propositional model counters \cite{thurley2006sharpsat,sang2004combining,oztok2015top}\footnote{Precomputed model counts were downloaded from https://sites.google.com/site/marcthurley/sharpsat/benchmarks/ collected-model-counts}.  Table \ref{high_prob_sat} shows bounds that hold with probability .95 and $k = 1$.  While the Gumbel lower bounds are always trivial, we produce non-trivial lower bounds for several model instances.  Our upper bounds are generally comparable to or tighter than Gumbel upper bounds.

\subsection{Analysis}

Our bounds are much looser than those computed by randomized hashing schemes \citep{chakraborty2013scalable,wishicml13,ermon2013embed,zhao2016closing}, but also require much less computation~\cite{uai13LPCount,achim2016beyond}.  While our approach provides polynomial runtime guarantees for MAP inference in the spin glass model after random perturbations have been applied, randomized hashing approaches do not.  For propositional model counting, we found that our method is computationally cheaper by over 2 orders of magnitude than results reported in \citet{zhao2016closing}.  Additionally, we tried reducing the runtime and accuracy of randomized hashing schemes by running code from \citet{zhao2016closing} with $f$ values of 0, .01, .02, .03, .04, and .05.  We set the maximum time limit to 1 hour (while our method required .01 to 6 seconds of computation for reported results).  Throughout experiments on models reported in Table \ref{high_prob_sat} our approach still generally required orders of magnitude less computation and also found tighter bounds in some instances.

Empirically, our lower bounds were comparable to or tighter than those obtained by Gumbel perturbations on both models.  The weighted Rademacher complexity is generally at least as good an estimator of $\log Z$ as the Gumbel upper bound, however it is only an estimator and not an upper bound.  Our upper bound using the weighted Rademacher complexity, which holds in expectation, is empirically weaker than the corresponding Gumbel expectation upper bound.  However, the slack term needed to transform our expectation bound into a high probability bound is tighter than the corresponding Gumbel slack term.  Since both slack terms approach $0$ in the limit of infinite computation ($k=\infty$, the number of samples used to estimate the expectation bound), this can result in a trade-off where we produce a tighter upper bound up to some value of $k$, after which the Gumbel bound becomes tighter.

\section{Conclusion}
We introduced the weighted Rademacher complexity, a novel generalization of Rademacher complexity.  We showed that this quantity can be used as an estimator of the size of a weighted set, and gave bounds on the weighted Rademacher complexity in terms of the weighted set size.  This allowed us to bound the sum of any non-negative weight function, such as the partition function, in terms of the weighted Rademacher complexity.  We showed how the weighted Rademacher complexity can be efficiently approximated whenever an efficient optimization oracle exists, as is the case for a variety of practical problems including calculating the partition function of certain graphical models
and the permanent of non-negative matrices.  Experimental evaluation demonstrated that our approach provides tighter bounds than competing methods under certain conditions.

In future work our estimator $\R(w)$ and bounds on $Z(w)$ may be generalized to other forms of randomness.  Rather than sampling $\mathbf{c}$ uniformly from $\{-1, 1\}^n$, we could conceivably sample each element $c_i$ from some other distribution, such as the uniform distribution over $[-1,1]$, a Gaussian, or Gumbel.  Our bounds should readily adapt to continuous uniform or gaussian distributions, although derivations may be more complex in general.  As another line of future work, the weighted Rademacher complexity may be useful beyond approximate inference to learning theory. 

\section{Acknowledgments}
We gratefully acknowledge funding from Ford,  FLI and NSF grants $\#1651565$, $\#1522054$, $\#1733686$.  We also thank Tri Dao, Aditya Grover, Rachel Luo, and anonymous reviewers.

\bibliography{bibliography}
\bibliographystyle{aaai}

\clearpage
\onecolumn
\section{Appendix}
We present formal proofs of our bounds on the sum $Z(w)$ of any non-negative weight function $w:\{-1,1\}^n \rightarrow [0, \infty)$.  For readability we occasionally restate results from the main paper.  The format of our proof is as follows.  First we bound the weighted Rademacher complexity, $\R(w)$, by the output of our optimization oracle ($\delta(\mathbf{c}, w)$, described in Assumption~\ref{opt_oracle}), which we refer to as the \textbf{slack bound}.  Next we \textbf{lower bound} the sum $Z(w)$ by $\R(w)$ and apply our slack bound to obtain a lower bound on $Z(w)$ in terms of $\delta(\mathbf{c}, w)$.  Similarly, we \textbf{upper bound} the sum $Z(w)$ by $\R(w)$ and apply our slack bound to obtain an upper bound on $Z(w)$ in terms of $\delta(\mathbf{c}, w)$.  Finally we \textbf{tighten the bounds} by repeatedly applying our optimization oracle.  

\subsection{Slack Bound}

We use McDiarmid's bound (Proposition \ref{mcdiarmid}) to bound the difference between the output of our optimization oracle ($\delta(\mathbf{c}, w)$, described in Assumption~\ref{opt_oracle}) and its expectation, which is the weighted Rademacher complexity $\R (w)$.  For the function
\[f_w(\mathbf{c}) = \delta(\mathbf{c},w) = \max_{\mathbf{x} \in \{-1,1\}^n} \{\langle \mathbf{c},\mathbf{x} \rangle + \log_2 w(\mathbf{x})\},\] 
the constant $d_i = 2$ in McDiarmid's bound, giving
\begin{align*}
P[|\delta(\mathbf{c}, w) - \R(w)|\geq\epsilon] \leq \exp \left( \frac{-2 \epsilon^2}{\sum_j c_j^2} \right) \\
P[|\delta(\mathbf{c}, w) - \R(w)|\geq\sqrt{6 n}] \leq \exp \left( \frac{-2 (\sqrt{6 n})^2}{4n} \right) \leq .05.\\
\end{align*}
By choosing $\mathbf{c} \in \{-1,1\}^n$ uniformly at random we can say with probability greater than .95 that
\begin{equation} \label{bound_max_by_expectation}
\R(w) - \sqrt{6 n} \leq \delta(\mathbf{c}, w) \leq \R(w) + \sqrt{6 n}.
\end{equation}

\subsection{Lower Bound} \label{appendix_LB}
In this section we lower bound the sum $Z(w)$ by $\R(w)$ and apply our slack bound to obtain a lower bound on $Z(w)$ in terms of $\delta(\mathbf{c}, w)$.  We extend the Massart lemma \cite[lemma 26.8]{shalev2014understanding} to the weighted setting by accounting for the $\log_2 w(\mathbf{x})$ weight term in the weighted Rademacher complexity.  Our lower bound on $Z(w)$ is given by the following Lemma:

\begin{lemma}For $\mathbf{c} \in \{-1,1\}^n$ sampled uniformly at random, the following bound holds with probability greater than .95:
\begin{align*}
\log_2 Z(w) \geq
\begin{cases}  
	\frac{(\delta(\mathbf{c}, w) - \sqrt{6 n} - \log_2 w_{min})^2}{2n} + \log_2 w_{min}, & \text{if } \frac{\delta(\mathbf{c}, w) - \sqrt{6 n} - \log_2 w_{min}}{n} \leq 1\\  
	\delta(\mathbf{c}, w) - \sqrt{6 n} - \frac{n}{2}, & \text{otherwise}
\end{cases}
\end{align*}
\end{lemma}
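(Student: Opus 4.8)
The plan is to chain together two ingredients that are already available: the upper bound on the weighted Rademacher complexity from Lemma~\ref{upper_bound_weighted_rademacher} (taken with $\gamma = 1$) and the slack bound~(\ref{bound_max_by_expectation}), and then to optimize the free parameter $\lambda$.

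First I would instantiate Lemma~\ref{upper_bound_weighted_rademacher} with $\gamma = 1$, restricting attention to $\lambda \in (0,1]$ so that $w^*(\lambda,1) = w_{min}$. Multiplying that inequality by $\lambda > 0$ and rearranging turns the \emph{upper} bound on $\R(w)$ into a \emph{lower} bound on $\log_2 Z(w)$:
\[
\log_2 Z(w) \;\geq\; \lambda\,\R(w) - (\lambda-1)\log_2 w_{min} - \lambda^2 \tfrac{n}{2}, \qquad \lambda \in (0,1].
\]
Next I would invoke the slack bound~(\ref{bound_max_by_expectation}): with probability exceeding $.95$ we have $\R(w) \geq \delta(\mathbf{c},w) - \sqrt{6n}$. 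Since the coefficient $\lambda$ of $\R(w)$ is positive, substituting preserves the inequality, and writing $D := \delta(\mathbf{c},w) - \sqrt{6n} - \log_2 w_{min}$ the right-hand side becomes a clean downward parabola in $\lambda$, namely $\lambda D + \log_2 w_{min} - \lambda^2 n/2$.

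The last step is to maximize this parabola over $\lambda \in (0,1]$. Its unconstrained maximizer is $\lambda^\star = D/n$. If $D/n \le 1$ then $\lambda^\star$ is feasible, and plugging it in gives $\tfrac{D^2}{2n} + \log_2 w_{min}$, which is the first branch of the claimed bound and corresponds exactly to the case condition in the statement. If $D/n > 1$ the parabola is increasing on $(0,1]$, so $\lambda = 1$ is optimal in the feasible range; the $\log_2 w_{min}$ term then cancels and the bound collapses to $\delta(\mathbf{c},w) - \sqrt{6n} - \tfrac{n}{2}$, the second branch. Everything happens on the single high-probability event from~(\ref{bound_max_by_expectation}), so the $.95$ guarantee is inherited verbatim.

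None of the algebra is delicate; the things to be careful about are the bookkeeping on inequality directions and the $w^*$ case split. The restriction $\lambda \le 1$ is precisely what forces Lemma~\ref{upper_bound_weighted_rademacher} to select $w^* = w_{min}$ rather than $w_{max}$, which is essential to match the statement; allowing $\lambda > 1$ would instead yield a $w_{max}$-dependent bound that is generally incomparable and is handled separately by the trivial fallback $\log_2 w_{max} \le \log_2 Z(w)$ (used in Algorithm~\ref{lower_alg}, but not needed for this lemma). One should also check that the boundary $\lambda = 1$ is permissible in Lemma~\ref{upper_bound_weighted_rademacher}, which it is, since there $\lambda\gamma = 1$ and the coefficient $\lambda - 1$ of $\log_2 w^*$ vanishes.
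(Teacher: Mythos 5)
Your proposal is correct and follows essentially the same route as the paper: it inverts the Massart-type upper bound on $\R(w)$ (Lemma~\ref{upper_bound_weighted_rademacher} with $\gamma=1$, $w^*=w_{min}$ for $\lambda\le 1$), applies the slack bound, and maximizes the resulting downward parabola in $\lambda$ over $(0,1]$, recovering the two branches at $\lambda^\star=D/n$ and $\lambda=1$. The only caveat — that $\lambda^\star=D/n$ is infeasible when $D\le 0$, so the first branch is not actually established in that corner case — is present in the paper's own proof as well, which handles it by noting $\R(w)\ge n+\log_2 w_{min}$ and discarding/resampling $\mathbf{c}$ when $D<0$.
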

\begin{proof}
We begin by upper bounding $\R(w)$ in terms of $Z(w)$.  Define $\mathbf{c} \in \{-1 ,1\}^n$ generated uniformly at random and $\mathbf{x} \in \{-1 ,1\}^n$.  For any $\lambda > 0$, $\gamma > 0$, and weight functions $w, w^\gamma:\{-1,1\}^n \rightarrow [0, \infty)$ with $w^\gamma(\mathbf{x}) = w(\mathbf{x})^\gamma$ we have
\begin{align*} 
\R(w^\gamma) = \E_{\mathbf{c}} \left[\max_\mathbf{x} \langle \mathbf{c},\mathbf{x} \rangle + \gamma \log_2 w(\mathbf{x})\right] =\\
\frac{1}{\lambda} \E_{\mathbf{c}} \left[\max_\mathbf{x} \lambda \langle \mathbf{c},\mathbf{x} \rangle + \lambda \gamma \log_2 w(\mathbf{x})\right] = \frac{1}{\lambda} \E_{\mathbf{c}} \left[\log_2 \max_\mathbf{x} 2^{\lambda (\langle \mathbf{c},\mathbf{x} \rangle + \gamma \log_2 w(\mathbf{x}))}\right] \leq \\
\frac{1}{\lambda} \E_{\mathbf{c}} \left[\log_2 \sum_\mathbf{x} 2^{\lambda (\langle \mathbf{c},\mathbf{x} \rangle + \gamma \log_2 w(\mathbf{x}))}\right] \stackrel{\text{Jensen}}{\leq} \\
\frac{1}{\lambda} \log_2 \E_{\mathbf{c}} \left[\sum_{\mathbf{x}} 2^{\lambda(\langle \mathbf{c},\mathbf{x} \rangle + \gamma \log_2 w(\mathbf{x}))}\right], \\
\end{align*}
where we have used Jensen's inequality.  By the linearity of expectation and independence between elements $c_i$ in a random vector $\mathbf{c}$,
\begin{align*} 
\R(w^\gamma) \leq \frac{1}{\lambda} \log_2 \sum_{\mathbf{x}} \left( 2^{\lambda \gamma \log_2 w(\mathbf{x})} \E_{\mathbf{c}} \left[2^{\lambda \langle \mathbf{c},\mathbf{x} \rangle}\right] \right) =\frac{1}{\lambda} \log_2 \sum_{\mathbf{x}} \left( 2^{\lambda \gamma \log_2 w(\mathbf{x})} \prod_{i=1}^n \E_{c_i} \left[2^{\lambda c_i x_i}\right] \right). \\
\end{align*}
Using Lemma A.6 from \cite{shalev2014understanding},
\begin{align*}
\R(w^\gamma) \leq \frac{1}{\lambda} \log_2 \sum_{\mathbf{x}} \left( 2^{\lambda \gamma \log_2 w(\mathbf{x})} \prod_{i=1}^n \frac{2^{ \lambda x_i}+2^{-\lambda x_i}}{2} \right) \leq \frac{1}{\lambda}  \log_2 \sum_{\mathbf{x}} \left( 2^{\lambda \gamma \log_2 w(\mathbf{x})} \prod_{i=1}^n 2^{ \frac{(\lambda x_i)^2}{2}} \right) = \\
\frac{1}{\lambda} \log_2 \sum_{\mathbf{x}} \left( 2^{\lambda \gamma \log_2 w(\mathbf{x})} 2^{ \frac{\lambda^2 ||\mathbf{x}||^2}{2}} \right) = \frac{1}{\lambda} \log_2 \left( 2^{\frac{\lambda^2 n}{2}} \sum_{\mathbf{x}} 2^{\lambda \gamma \log_2 w(\mathbf{x})} \right) = \frac{1}{\lambda} \log_2 \left( \sum_{\mathbf{x}} 2^{\lambda \gamma \log_2 w(\mathbf{x})} \right) + \frac{\lambda n}{2}. \\
\end{align*}
\jonathan{The inequality $(2^{-\lambda} + 2^\lambda)/2 \leq 2^{\lambda^2/2}$ is good for $\lambda<1$, but poor for $\lambda>>1$.} Next,
\begin{align} \label{rademacher_upperBound}
\begin{split}
\R(w^\gamma) \leq
\frac{1}{\lambda} \log_2 \left(\sum_{\mathbf{x}} w(\mathbf{x})^{\lambda\gamma} \right) + \frac{\lambda n}{2} 
=  \frac{1}{\lambda} \log_2 \left(\sum_{\mathbf{x}} w(\mathbf{x}) w(\mathbf{x})^{\lambda\gamma-1} \right) + \frac{\lambda n}{2} \leq\\
\frac{1}{\lambda} \log_2 \left(\max_\mathbf{x} \left\{ w(\mathbf{x})^{\lambda\gamma-1} \right\} \sum_{\mathbf{x}} w(\mathbf{x})  \right) + \frac{\lambda n}{2}  =
\frac{1}{\lambda} \log_2 Z(w)  +\frac{1}{\lambda} \log_2  \max_\mathbf{x} \{w_{max}^{\lambda\gamma-1},w_{min}^{\lambda\gamma-1}\} +\lambda \frac{ n}{2} =\\
\frac{1}{\lambda} \log_2 Z(w)  +\frac{\lambda\gamma - 1}{\lambda} \log_2  w^*(\lambda, \gamma) +\lambda \frac{ n}{2},\\
\end{split}
\end{align}
where
\begin{align*} \label{w_star}
w^*(\lambda, \gamma) =
\begin{cases}
    w_{max} = \max_\mathbf{x} w(\mathbf{x}), & \text{if } \lambda \gamma \geq 1\\
    w_{min} = \min_\mathbf{x} \{w(\mathbf{x}) : w(\mathbf{x}) > 0\}, & \text{if } \lambda \gamma \leq 1\\
\end{cases}.
\end{align*}
Note that for $\lambda \gamma = 1$ we have two valid inequalities that hold for either choice of $w^*(\lambda, \gamma)$.\jonathan{can the notation be improved for defining $w^*(\lambda)$ when $\lambda = 1$?}  Having bounded the weighted Rademacher complexity in terms of $Z(w)$, we now apply the slack bound from equation \ref{bound_max_by_expectation} and have that with probability greater than .95
\begin{equation} \label{delta_UB}
\delta(\mathbf{c}, w) \leq \frac{1}{\lambda} \log_2 Z(w)  +\frac{\lambda \gamma - 1}{\lambda} \log_2  w^*(\lambda, \gamma) +\lambda \frac{ n}{2} + \sqrt{6 n}.\\
\end{equation}
This upper bound on $\delta(w)$ holds for any $\lambda > 0$, so we could jointly optimize over $\lambda$ and $\gamma$ to make the bound as tight as possible.  However, this is non-trivial because changing $\gamma$ changes the weight function we supply to our optimization oracle.  Instead we generally set $\gamma=1$ and optimize over only $\lambda$.  At the end of this section we derive another bound with a different choice of $\gamma$.  This bound is trivial to derive, but illustrates that other choices of $\gamma$ could result in meaningful bounds.

Rewriting the bound in Equation~\ref{delta_UB} with $\gamma=1$ and $w^*(\lambda) = w^*(\lambda, 1)$ we have
\begin{equation} \label{lb_all_one_side}
-(\lambda - 1) \log_2  w^*(\lambda) -\lambda^2 \frac{ n}{2} + \lambda (\delta(\mathbf{c}, w) - \sqrt{6 n}) \leq  \log_2 Z(w)  ,
\end{equation}
so the optimal value of $\lambda$ that makes our bound as tight as possible occurs at the maximum of the quadratic function
\begin{align*}
h(\lambda) = -\lambda^2 \frac{n}{2} - \lambda (\log_2 w^*(\lambda) - (\delta(\mathbf{c}, w) - \sqrt{6 n})) + \log_2 w^*(\lambda)\\
h'(\lambda) = - \lambda n - \log_2 w^*(\lambda) + \left( \delta(\mathbf{c}, w) - \sqrt{6 n} \right) \\
h''(\lambda) = -n,
\end{align*}
Where the stated derivatives are valid for $\lambda \neq 1$, as $w^*(\lambda)$ is piecewise constant with a discontinuity at $\lambda = 1$.  The maximum of $h(\lambda)$ must occur at $\lambda = -\infty, +\infty,1$, or the value of $\lambda$ that makes $h'(\lambda)=0$.  By inspection the maximum does not occur at $\lambda = \pm \infty$, so the maximum will occur at $h'(\lambda)=0$ or else $\lambda=1$ if the derivative is never zero.  We have $h'(\lambda)=0$ at $\lambda = (\delta(\mathbf{c}, w) - \sqrt{6 n} - \log_2 w^*(\lambda))/n$.  Rearranging equation \ref{lb_all_one_side} we have
\begin{equation*} 
- \lambda^2 \frac{n}{2} + \lambda \left( \delta(\mathbf{c}, w) - \sqrt{6 n} - \log_2  w^*(\lambda) \right) + \log_2  w^*(\lambda)   \leq  \log_2 Z(w).
\end{equation*}
Depending on the value of $\delta(\mathbf{c}, w)$ we have 3 separate regimes for the optimal lower bound on $Z(w)$.


\begin{enumerate}
\item $\delta(\mathbf{c}, w) < n + \sqrt{6n} + \log_2 w_{min}$: In this case $h'(\lambda)=0$ at $\lambda = (\delta(\mathbf{c}, w) - \sqrt{6 n} - \log_2 w_{min})/n$ (note $\lambda < 1$ so that $w^*(\lambda) = w_{min}$) and the optimal lower bound is
\[
\frac{(\delta(\mathbf{c}, w) - \sqrt{6 n} - \log_2 w_{min})^2}{2n} + \log_2 w_{min} \leq \log_2 Z(w).
\]
We require that $\lambda >0$, but note that we can discard our bound and recompute with a new $\mathbf{c}$ if we find that $\lambda < 0$ for our computed value of $\delta(\mathbf{c}, w)$, as this can only happen with low probability when our slack bound is violated and we have estimated $\R(w)$ poorly with $\delta(\mathbf{c}, w)$.  Note that $\R(w) = \E_{\mathbf{c}}\left[ \max_{\mathbf{x}\in \{-1,1\}^n} \{\langle \mathbf{c},\mathbf{x} \rangle + \log_2 w(\mathbf{x})\} \right] \geq \E_{\mathbf{c}}\left[ \max_{\mathbf{x}\in \{-1,1\}^n} \{\langle \mathbf{c},\mathbf{x} \rangle + \log_2 w_{min}\} \right] = n + \log_2 w_{min}$, so $\R(w) - \log_2 w_{min} = n > 0$.

\item $n + \sqrt{6n} + \log_2 w_{min} < \delta(\mathbf{c}, w) < n + \sqrt{6n} + \log_2 w_{max}$: In this case $h'(\lambda)$ is never zero, so at $\lambda=1$ we have the optimal lower bound of
\[
\delta(\mathbf{c}, w) - \sqrt{6 n} - \frac{n}{2} \leq \log_2 Z(w),
\]

\item $\delta(\mathbf{c}, w) > n + \sqrt{6n} + \log_2 w_{max}$: This case cannot occur because $\delta(\mathbf{c}, w) \leq n + \log_2 w_{max}$ by definition.

\end{enumerate}

\end{proof}

We now illustrate how alternative choices of $\gamma$ could result in meaningful bounds.  From Equation~\ref{rademacher_upperBound} we have

\begin{align*}
\R(w^\gamma) \leq
\frac{1}{\lambda} \log_2 Z(w)  +\frac{\lambda\gamma - 1}{\lambda} \log_2  w^*(\lambda, \gamma) +\lambda \frac{ n}{2}\\
\E_{\mathbf{c}}\left[ \max_{\mathbf{x}\in \{-1,1\}^n} \left\{ \langle \mathbf{c},\mathbf{x} \rangle + \gamma \log_2w(\mathbf{x}) \right\} \right] \leq
\frac{1}{\lambda} \log_2 Z(w)  +\frac{\lambda\gamma - 1}{\lambda} \log_2  w^*(\lambda, \gamma) +\lambda \frac{ n}{2}\\
\E_{\mathbf{c}}\left[ \max_{\mathbf{x}\in \{-1,1\}^n} \left\{ \langle \mathbf{c},\mathbf{x} \rangle + \gamma \log_2\frac{w(\mathbf{x})}{w^*(\lambda, \gamma)} \right\} \right] \leq
\frac{1}{\lambda} \log_2 \frac{Z(w)}{w^*(\lambda, \gamma)} + \lambda \frac{ n}{2}.\\
\end{align*}
For sufficiently large $\gamma$ ($\gamma \geq \frac{1}{\lambda}$) we have $w^*(\lambda, \gamma) = w_{max}$, which makes $\log_2\frac{w(\mathbf{x})}{w^*(\lambda, \gamma)} \leq 0$ for all $x$.  Further, for sufficiently large $\gamma$, 
\[
 \argmax_{\mathbf{x}\in \{-1,1\}^n} \left\{ \langle \mathbf{c},\mathbf{x} \rangle + \gamma \log_2\frac{w(\mathbf{x})}{w^*(\lambda, \gamma)} \right\}  =  \argmax_{\mathbf{x}\in \{-1,1\}^n} \left\{ w(\mathbf{x}) \right\}
\]
(for all $\mathbf{c}$) and when a single element $\mathbf{x}\in \{-1,1\}^n$ has the unique largest weight ($w(\mathbf{x}) > w(\mathbf{y}) \forall x \neq y$)
\[
\E_{\mathbf{c}}\left[ \max_{\mathbf{x}\in \{-1,1\}^n} \left\{ \langle \mathbf{c},\mathbf{x} \rangle + \gamma \log_2\frac{w(\mathbf{x})}{w^*(\lambda, \gamma)} \right\} \right] = 0.
\]
Therefore, for sufficiently large $\gamma$ and
$
\lambda = \sqrt{\frac{2 \log_2{\frac{Z(w)}{w_{max}}}}{n}}
$, we have
\begin{align*}
\E_{\mathbf{c}}\left[ \max_{\mathbf{x}\in \{-1,1\}^n} \left\{ \langle \mathbf{c},\mathbf{x} \rangle + \gamma \log_2\frac{w(\mathbf{x})}{w^*(\lambda, \gamma)} \right\} \right] \leq
\frac{1}{\lambda} \log_2 \frac{Z(w)}{w^*(\lambda, \gamma)} + \lambda \frac{ n}{2}\\
0 \leq
\sqrt{2 n \log_2 \frac{Z(w)}{w_{max}}}\\
w_{max} \leq Z(w). \\
\end{align*}
This bound is trivial, however we picked $\gamma$ poorly.  To tighten this bound we would make $\gamma$ as small as possible subject to $\gamma \geq \frac{1}{\lambda}$ or alternatively we would make $\gamma$ as big as possible subject to $\gamma \leq \frac{1}{\lambda}$.  Note that we have used $
\lambda = \sqrt{\frac{2 \log_2{\frac{Z(w)}{w_{max}}}}{n}}
$, so gauranteeing $\gamma \geq \frac{1}{\lambda}$ or $\gamma \leq \frac{1}{\lambda}$ requires a bound on $Z(w)$.  We leave joint optimization over $\lambda$ and $\gamma$ for future work.

\subsection{Upper Bound}
In this section we upper bound the sum $Z(w)$ by $\R(w)$ and apply our slack bound to obtain an upper bound on $Z(w)$ in terms of $\delta(\mathbf{c}, w)$.  Our proof technique is inspired by \cite{barvinok1997approximate}, who developed the method for bounding the sum $Z(w)$ of a weight function with values of either 0 or 1.  We generalize the proof to the weighted setting for any weight function $w:\{-1,1\}^n \rightarrow [0,\infty)$.  The principle underlying the method is that by dividing the space $\{-1,1\}^n$ in half $n$ times we arrive at a single weight.  By judiciously choosing which half of the space to recurse into at each step we can bound upper bound $Z(w)$.

Define the $j$ dimensional space $I_j = \{\mathbf{x}: \mathbf{x} \in \{-1,1\}^j\}$ for $j \geq 1$ and $I_0=\{(0)\}$.  For any vector $\mathbf{x} \in I_{j-1}$ with $j \geq 2$, define $\mathbf{x}^+,\mathbf{x}^{-} \in I_j$ as
\[
\mathbf{x}^+ = (x_1, x_2, \ldots, x_{j-1},1), \ \  \mathbf{x}^- = (x_1, x_2, \ldots, x_{j-1},-1).
\]
For the single element $(0) \in I_0$, define $(0)^+=(1)$ and $(0)^-=(-1)$. \jonathan{Is this notation clear with $\{-1,1\}^j$ referring to any vector of length $j$ and $(x_1, x_2, \ldots, x_{j-1},1)$ referring to a particular vector of length $j$?}

Given the weight function $w_j: I_j \rightarrow [0,\infty)$ (in this section we explicitly write $w_j$ to denote that the weight function has a $j$ dimensional domain while $w$ implicitly denotes $w_n$), define the weight functions
\[
w^+_{j-1}:\{-1,1\}^{j-1} \rightarrow [0,\infty), \text{with } w^+_{j-1}(\mathbf{x}) = w_j(\mathbf{x}^+)
\]
and
\[
w^-_{j-1}:\{-1,1\}^{j-1} \rightarrow [0,\infty), \text{with } w^-_{j-1}(\mathbf{x}) = w_j(\mathbf{x}^-)
\]
We have split the weights of our original weight function between two new weight functions, each with $j-1$ dimensional domains (two disjoint half spaces of our original $j$ dimensional domain). \jonathan{is this sloppy or clear?}  Now we relate the expectation $\R(w_j)$ to the expectations $\R(w_{j-1}^+)$ and $\R(w_{j-1}^-)$ in Lemmas \ref{recursive_Delta_1} and \ref{recursive_Delta_2}.

\begin{lemma} \label{recursive_Delta_1}
For $j \geq 1$ one has
\[
\R(w_{j-1}^+), \R(w_{j-1}^-) \leq \R(w_j)
\]
\end{lemma}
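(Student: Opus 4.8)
The plan is to relate $\R(w_j)$ directly to $\R(w_{j-1}^+)$ by peeling off the last coordinate of both the optimization variable $\mathbf{x}$ and the Rademacher vector $\mathbf{c}$. Write $\mathbf{c} = (\mathbf{c}', c_j)$ with $\mathbf{c}' \in \{-1,1\}^{j-1}$ and $c_j \in \{-1,1\}$, and similarly split $\mathbf{x} = (\mathbf{x}', x_j)$. Then $\langle \mathbf{c}, \mathbf{x}\rangle + \log w_j(\mathbf{x}) = \langle \mathbf{c}', \mathbf{x}'\rangle + c_j x_j + \log w_j(\mathbf{x}', x_j)$. Taking the max over $x_j \in \{-1,1\}$ first, the inner maximum is at least the value obtained by forcing $x_j = 1$, namely $\langle \mathbf{c}', \mathbf{x}'\rangle + c_j + \log w_{j-1}^+(\mathbf{x}')$. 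Hence
\begin{equation*}
\max_{\mathbf{x} \in \{-1,1\}^j} \{\langle \mathbf{c},\mathbf{x}\rangle + \log w_j(\mathbf{x})\} \;\geq\; c_j + \max_{\mathbf{x}' \in \{-1,1\}^{j-1}} \{\langle \mathbf{c}',\mathbf{x}'\rangle + \log w_{j-1}^+(\mathbf{x}')\}.
\end{equation*}

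Next I would take expectations over $\mathbf{c}$ on both sides. Since $\mathbf{c}$ is uniform on $\{-1,1\}^j$, its components are independent; $\mathbf{c}'$ is uniform on $\{-1,1\}^{j-1}$ and $c_j$ is an independent Rademacher variable with $\E[c_j] = 0$. The left side becomes $\R(w_j)$ by Definition~\ref{weighted_rademacher}. On the right side, linearity of expectation splits the two terms: $\E_{\mathbf{c}}[c_j] = 0$, and $\E_{\mathbf{c}}\big[\max_{\mathbf{x}'}\{\langle \mathbf{c}',\mathbf{x}'\rangle + \log w_{j-1}^+(\mathbf{x}')\}\big] = \E_{\mathbf{c}'}\big[\max_{\mathbf{x}'}\{\langle \mathbf{c}',\mathbf{x}'\rangle + \log w_{j-1}^+(\mathbf{x}')\}\big] = \R(w_{j-1}^+)$ because the integrand does not depend on $c_j$. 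This yields $\R(w_j) \geq \R(w_{j-1}^+)$. The argument for $w_{j-1}^-$ is identical, forcing $x_j = -1$ instead. The case $j = 1$ is handled by the same reasoning using the convention $I_0 = \{(0)\}$ and $(0)^\pm = (\pm 1)$, where $\R(w_0^\pm)$ is just the constant $\log w_0^\pm((0)) = \log w_1(\pm 1)$, and $\R(w_1) = \E_{c_1}[\max\{c_1 + \log w_1(1), -c_1 + \log w_1(-1)\}] \geq \E_{c_1}[c_1 + \log w_1(1)] = \log w_1(1)$, and similarly for the $-$ branch.

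I do not anticipate a serious obstacle here: the only subtlety is bookkeeping — making sure the splitting of $\mathbf{c}$ into $(\mathbf{c}', c_j)$ is done cleanly and that the independence of $c_j$ from the remaining maximization is invoked correctly so that the $\E[c_j] = 0$ term genuinely vanishes rather than interacting with the max. One should also be slightly careful about the degenerate case $w(\mathbf{x}) = 0$ (so $\log w(\mathbf{x}) = -\infty$), but since we are proving a lower bound on $\R(w_j)$ and the forced choice $x_j = 1$ (resp. $-1$) could itself give $-\infty$, the inequality still holds trivially in $[-\infty,\infty)$. This lemma, together with its companion Lemma~\ref{recursive_Delta_2}, then feeds the recursion that produces the upper bound on $Z(w)$.
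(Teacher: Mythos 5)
Your proof is correct and takes essentially the same route as the paper: forcing the last coordinate of $\mathbf{x}$ to $+1$ (resp.\ $-1$) inside the max and then letting the $c_j x_j$ term average to zero over the sign of $c_j$ is exactly the paper's step of writing $\langle \mathbf{c},\mathbf{x}\rangle + \log w_{j-1}^{+}(\mathbf{x})$ as the average of the $\mathbf{c}^{+}$ and $\mathbf{c}^{-}$ contributions and bounding each by $\delta(\mathbf{c}^{+},w_j)$ and $\delta(\mathbf{c}^{-},w_j)$. The only difference is bookkeeping (an expectation over an independent $c_j$ versus explicit pairing of the two extensions of $\mathbf{c}$), so nothing further is needed.
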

\begin{proof}
Given $\mathbf{x},\mathbf{c} \in I_{j-1}$ we have
\begin{align}
\langle \mathbf{c},\mathbf{x} \rangle + \log_2 w_{j-1}^+(\mathbf{x})= \frac{\left( \langle \mathbf{c}^+,\mathbf{x}^+ \rangle+ \log_2 w_j(\mathbf{x}^+) \right) + \left( \langle \mathbf{c}^-,\mathbf{x}^+ \rangle+ \log_2 w_j(\mathbf{x}^+) \right)}{2} \label{lhs_to_maximize} \\ 
\leq \frac{\max_{\mathbf{y} \in I_j} \left\{ \langle \mathbf{c}^+,\mathbf{y} \rangle+ \log_2 w_j(\mathbf{y}) \right\} + \max_{\mathbf{y} \in I_j} \left\{ \langle \mathbf{c}^-,\mathbf{y} \rangle + \log_2 w_j(\mathbf{y}) \right\}}{2} \nonumber \\
= \frac{\delta(\mathbf{c}^+,w_j)+\delta(\mathbf{c}^-,w_j)}{2}. \nonumber 
\end{align}
This inequality holds for any $\mathbf{x}$, so we can maximize the left hand side of Equation \ref{lhs_to_maximize} over $\mathbf{x}$ and get
\begin{equation*}
\delta(\mathbf{c},w_{j-1}^+) = \max_{\mathbf{x} \in I_{j-1}} \left\{ \langle \mathbf{c},\mathbf{x} \rangle + \log_2 w_{j-1}^+(\mathbf{x}) \right\} \leq \frac{\delta(\mathbf{c}^+,w_j)+\delta(\mathbf{c}^-,w_j)}{2}.
\end{equation*}
Now we average over $\mathbf{c} \in I_{j-1}$ and get
\begin{align*}
\R(w_{j-1}^+) &= \frac{1}{2^{j-1}} \sum_{\mathbf{c} \in I_{j-1}} \delta(\mathbf{c},w_{j-1}^+) 
\\
&\leq  
\frac{1}{2^{j-1}} \sum_{\mathbf{c} \in I_{j-1}} \frac{\delta(\mathbf{c}^+,w_j)+\delta(\mathbf{c}^-,w_j)}{2}\\
&= \frac{1}{2^{j}} \sum_{\mathbf{c} \in I_{j}} \delta(\mathbf{c},w_j)
=
\R(w_j)
\end{align*}
The proof for $\R(w_{j-1}^-) \leq \R(w_j)$ follows the same structure.
\end{proof}

\begin{lemma} \label{recursive_Delta_2}
For $j \geq 1$ one has
\[
\frac{\R(w_{j-1}^-)+\R(w_{j-1}^+)}{2} \leq \R(w_j) - 1
\]
\end{lemma}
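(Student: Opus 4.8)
The plan is to mimic the recursive decomposition used in the proof of Lemma \ref{recursive_Delta_1}, but to extract the extra additive constant $1$ by averaging over the \emph{last} Rademacher coordinate of $\mathbf{c}$ before applying any bound. Concretely, write $\mathbf{c} = (\mathbf{c}', c_j) \in I_{j-1} \times \{-1,1\}$ and split the maximum defining $\delta(\mathbf{c}, w_j)$ according to whether the last coordinate of the maximizing $\mathbf{y}$ is $+1$ or $-1$. Since $w_{j-1}^{+}(\mathbf{x}) = w_j(\mathbf{x}^{+})$ and $w_{j-1}^{-}(\mathbf{x}) = w_j(\mathbf{x}^{-})$, this gives the exact identity
\[
\delta(\mathbf{c}, w_j) = \max\bigl\{\, c_j + \delta(\mathbf{c}', w_{j-1}^{+}),\ -c_j + \delta(\mathbf{c}', w_{j-1}^{-}) \,\bigr\}.
\]

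Next I would fix $\mathbf{c}'$ and average over $c_j \in \{-1,1\}$. Writing $a = \delta(\mathbf{c}', w_{j-1}^{+})$ and $b = \delta(\mathbf{c}', w_{j-1}^{-})$, this reduces the problem to the elementary pointwise inequality
\[
\tfrac{1}{2}\Bigl(\max\{1+a,\ b-1\} + \max\{a-1,\ 1+b\}\Bigr) \ \geq\ \tfrac{a+b}{2} + 1,
\]
valid for all real $a,b$ (and trivially when either equals $-\infty$, since then the right-hand side is $-\infty$). I would prove this by a short case analysis: if $a \geq b$ then $\max\{1+a,\ b-1\} = 1+a$, while the second maximum equals $1+b$ when $a - b < 2$ (so the sum is exactly $a+b+2$) and equals $a-1$ when $a-b \geq 2$ (so the sum is $2a \geq a+b+2$); the case $b > a$ is symmetric.

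Finally, taking expectation over $\mathbf{c}' \in I_{j-1}$ and using linearity of expectation together with $\R(w_{j-1}^{\pm}) = \E_{\mathbf{c}'}[\delta(\mathbf{c}', w_{j-1}^{\pm})]$ yields
\[
\R(w_j) = \E_{\mathbf{c}'}\!\left[\tfrac{1}{2}\Bigl(\max\{1+a,\ b-1\} + \max\{a-1,\ 1+b\}\Bigr)\right] \geq \frac{\R(w_{j-1}^{+}) + \R(w_{j-1}^{-})}{2} + 1,
\]
which is the claim. The only mildly delicate point is the bookkeeping in the decomposition step, in particular the base case $j=1$, where $I_0 = \{(0)\}$ and $\R(w_0^{\pm}) = \log w_1(\pm 1)$, so the statement collapses exactly to the pointwise inequality above; everything else is routine.
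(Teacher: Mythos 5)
Your proof is correct and follows essentially the same route as the paper: both arguments decompose along the last coordinate, pair $w_{j-1}^{+}$ with the extension $c_j=+1$ and $w_{j-1}^{-}$ with $c_j=-1$, and average over that coordinate to extract the additive $1$. The only difference is cosmetic --- the paper gets your pointwise inequality immediately from $1+a \leq \max\{1+a,\, b-1\}$ and $1+b \leq \max\{a-1,\, 1+b\}$ (i.e., $\delta(\mathbf{c},w_{j-1}^{\pm}) \leq \delta(\mathbf{c}^{\pm},w_j)-1$), so your case analysis on $a-b$ is slightly more work than necessary.
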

\begin{proof}
Let $\mathbf{c},\mathbf{x}\in I_{j-1}$. Then
\begin{align*}
\label{lhs_to_maximize_2} \langle \mathbf{c},\mathbf{x} \rangle + \log_2 w_{j-1}^+(\mathbf{x})= \langle \mathbf{c}^+,\mathbf{x}^+ \rangle -1 +\log_2 w_{j-1}^+(\mathbf{x})\\
=\langle \mathbf{c}^+,\mathbf{x}^+ \rangle +  \log_2 w_j(\mathbf{x}^+) -1 \\
\leq \delta(\mathbf{c}^+,w_j) - 1
\end{align*}
The inequality $\langle \mathbf{c},\mathbf{x} \rangle + \log_2 w_{j-1}^+(\mathbf{x}) \leq \delta(\mathbf{c}^+,w_j) - 1$ holds for any $\mathbf{x}$.  Maximizing over $\mathbf{x}$ we get
\[
\delta(\mathbf{c},w_{j-1}^+) \leq \delta(\mathbf{c}^+,w_j) - 1.
\]
Similarly,
\begin{align*}
\langle \mathbf{c},\mathbf{x} \rangle + \log_2 w_{j-1}^-(\mathbf{x})= \langle \mathbf{c}^-,\mathbf{x}^- \rangle -1 + \log_2 w_{j-1}^-(\mathbf{x})\\
= \langle \mathbf{c}^-,\mathbf{x}^- \rangle + \log_2 w_j(\mathbf{x}^-) -1 \\
\leq \delta(\mathbf{c}^-,w_j) - 1,
\end{align*}
and maximizing over $\mathbf{x}$ we get
\[
\delta(\mathbf{c},w_{j-1}^-) \leq \delta(\mathbf{c}^-,w_j) - 1.
\]
Therefore
\begin{align*}
\frac{\delta(\mathbf{c},w_{j-1}^-)+\delta(\mathbf{c},w_{j-1}^+)}{2} \leq \frac{\delta(\mathbf{c}^-,w_j)+\delta(\mathbf{c}^+,w_j)}{2} - 1
\end{align*}
and averaging over $\mathbf{c} \in I_{j-1}$ we get
\begin{align*}
\frac{\R(w_{j-1}^-)+\R(w_{j-1}^+)}{2} = \frac{1}{2^{j-1}} \sum_{\mathbf{c} \in I_{j-1}} \frac{\delta(\mathbf{c},w_{j-1}^-)+\delta(\mathbf{c},w_{j-1}^+)}{2}\\
\leq \frac{1}{2^{j-1}} \sum_{\mathbf{c} \in I_{j-1}} \left( \frac{\delta(\mathbf{c}^-,w_j)+\delta(\mathbf{c}^+,w_j)}{2} - 1 \right) \\ = \frac{1}{2^{j}} \left( \sum_{\mathbf{c} \in I_{j}}  \delta(\mathbf{c},w_j) \right) - 1 = \R(w_j) - 1.  \\
\end{align*}
\end{proof}

Equipped with our relations between the expectation $\R(w_j)$ and the expectations $\R(w_{j-1}^+)$ and $\R(w_{j-1}^-)$ from Lemmas \ref{recursive_Delta_1} and \ref{recursive_Delta_2}, we are prepared to upper bound $Z(w_n)$ by $\R(w_n)$.  To understand our strategy it is helpful to view the weight function $w_j:\{-1,1\}^{j} \rightarrow [0,\infty)$ as a binary tree with $2^j$ leaf nodes, each corresponding to a weight.  The weight functions $w_{j-1}^+$ and $w_{j-1}^-$ correspond to subtrees whose root nodes are the two children of the root node in the complete tree representing $w_j$.  Our strategy is to recursively divide the original weight function $w_n$ in half, picking one of the two subtrees based on their relative sizes (as measured by the sum of each subtree's weights and their weighted Rademacher complexities).  Eventually we arrive at a leaf, corresponding to a single weight.  This allows us to relate the weighted Rademacher complexity of the original weight function (or the entire tree) to $Z(w_n)$ and the weight of this single leaf.  The problem of choosing which subtree to pick at each step based on their relative sizes is computationally intractable.  To avoid this difficulty we do not explicitly find the leaf, but instead conservatively pick either the largest or smallest weight in the entire tree, which gives us an upper bound on $Z(w_n)$ in terms of $\R(w_n)$.  After upper bounding $Z(w_n)$ by $\R(w_n)$ we apply our slack bound to obtain an upper bound on $Z(w_n)$ in terms of $\delta(\mathbf{c}, w_n)$. 

\begin{lemma}  Assuming $Z(w) > 0$, for $\mathbf{c} \in \{-1,1\}^n$ sampled uniformly at random, the following bound holds with probability greater than .95:
\begin{align*}
\log_2 Z(w) \leq
\begin{cases}
	 \log_2 \left(\frac{1 - \beta_{opt}}{\beta_{opt}}\right) \left( \delta(\mathbf{c}, w) + \sqrt{6 n} - \log_2 w_{min} \right) - n \log_2 \left( 1 - \beta_{opt} \right) + \log_2 w_{min}, & \text{if } 0 < \beta_{opt} < \frac{1}{3} \\  
	\log_2 \left(\frac{1 - \beta_{opt}}{\beta_{opt}}\right) \left( \delta(\mathbf{c}, w) + \sqrt{6 n} - \log_2 w_{max} \right) - n \log_2 \left( 1 - \beta_{opt} \right) + \log_2 w_{max}, & \text{if } \frac{1}{3} < \beta_{opt} < \frac{1}{2} \\  
	 \delta(\mathbf{c}, w) + \sqrt{6 n} + n \log_2 \left( \frac{3}{2} \right) \approx \delta(\mathbf{c}, w) + \sqrt{6 n} + .58 n,  & \text{if } \beta_{opt} = \frac{1}{3}\\
     \log_2 w_{max} + n,  & \text{if } \beta_{opt} = \frac{1}{2}\\
\end{cases}
\end{align*}

where 

\begin{align*}
\beta_{opt} = 
\begin{cases}
	 \frac{\delta(\mathbf{c}, w) + \sqrt{6 n} - \log_2 w_{min}}{n}, & \text{if } 0 < \frac{\delta(\mathbf{c}, w) + \sqrt{6 n} - \log_2 w_{min}}{n} < \frac{1}{3} \\  
	\frac{\delta(\mathbf{c}, w) + \sqrt{6 n} - \log_2 w_{max}}{n}, & \text{if } \frac{1}{3} < \frac{\delta(\mathbf{c}, w) + \sqrt{6 n} - \log_2 w_{max}}{n} < \frac{1}{2} \\   
	\frac{1}{2}, & \text{if } \frac{1}{2} < \frac{\delta(\mathbf{c}, w) + \sqrt{6 n} - \log_2 w_{max}}{n} \\    
	\frac{1}{3}, & \text{otherwise } 
\end{cases}
\end{align*}

\end{lemma}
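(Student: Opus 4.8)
The plan is to first turn the recursive inequalities of Lemmas~\ref{recursive_Delta_1}--\ref{recursive_Delta_2} into a one‑parameter family of bounds $\log_2 Z(w) \le (\text{function of }\R(w)\text{ and }\beta)$ for $\beta\in(0,1/2)$ --- which is nothing but a rearrangement of Lemma~\ref{lower_bound_weighted_rademacher} --- then substitute the slack bound $\R(w)\le\delta(\mathbf{c},w)+\sqrt{6n}$ from Eq.~(\ref{bound_max_by_expectation}), and finally minimize over $\beta$, which is what produces the four‑way case split in the statement.

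\textbf{Tree descent and the key accounting.} View $w=w_n$ as a complete binary tree of depth $n$ with the weights $w(\mathbf{x})$ at its $2^n$ leaves, fix $\beta\in(0,1/2)$, and descend from the root one level at a time. At a node $w_j$ with children $w_{j-1}^+,w_{j-1}^-$, let $A$ be the child with the smaller value of $\R$ and $B$ the other. Lemma~\ref{recursive_Delta_2} gives $2\R(A)\le\R(A)+\R(B)\le 2\R(w_j)-2$, hence $\R(A)\le\R(w_j)-1$, while Lemma~\ref{recursive_Delta_1} gives $\R(B)\le\R(w_j)$. Now pick the child by the rule: descend into $A$ if $Z(A)\ge\beta\,Z(w_j)$ (then $\R$ drops by at least $1$ and $\log_2 Z$ drops by at most $\log_2(1/\beta)$); otherwise $Z(B)>(1-\beta)Z(w_j)$, so descend into $B$ ($\R$ drops by at least $0$, $\log_2 Z$ drops by less than $\log_2\tfrac{1}{1-\beta}$). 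Because $Z(w)>0$, every visited node has positive total weight, so the descent ends at a leaf of weight $w_0\in[w_{min},w_{max}]$ with $\log_2 Z(w_0)=\R(w_0)=\log_2 w_0$. Letting $m$ be the number of ``descend into $A$'' steps, the accumulated drops give simultaneously $m\le\R(w)-\log_2 w_0$ and $\log_2 Z(w)\le\log_2 w_0+n\log_2\tfrac{1}{1-\beta}+m\log_2\tfrac{1-\beta}{\beta}$; since $\beta<1/2$ forces $\log_2\tfrac{1-\beta}{\beta}>0$, substituting the bound on $m$ gives $\log_2 Z(w)\le\log_2 w_0\,(1-\log_2\tfrac{1-\beta}{\beta})+\log_2\tfrac{1-\beta}{\beta}\,\R(w)-n\log_2(1-\beta)$. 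The coefficient $1-\log_2\tfrac{1-\beta}{\beta}$ is negative for $\beta<1/3$, positive for $\beta>1/3$, zero at $\beta=1/3$, so bounding $w_0$ by $w_{min}$ or $w_{max}$ accordingly recovers Lemma~\ref{lower_bound_weighted_rademacher}; substituting $\R(w)\le\delta(\mathbf{c},w)+\sqrt{6n}$ (valid with probability $>0.95$ by Eq.~(\ref{bound_max_by_expectation}), the substitution being monotone since $\R(w)$ has a nonnegative coefficient) then bounds $\log_2 Z(w)$ in terms of $\delta(\mathbf{c},w)$ for every $\beta\in(0,1/2)$.

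\textbf{Optimizing over $\beta$.} With $D=\delta(\mathbf{c},w)+\sqrt{6n}-\log_2 w^*$, the right‑hand side is, up to the additive $\log_2 w^*$, the function $g(\beta)=D\log_2\tfrac{1-\beta}{\beta}-n\log_2(1-\beta)$, and $g'(\beta)$ has the sign of $n-D/\beta$, so $g$ is unimodal on $(0,1/2)$ with interior stationary point $\beta=D/n$. Taking $w^*=w_{min}$ yields the candidate $\beta_{min}=(\delta(\mathbf{c},w)+\sqrt{6n}-\log_2 w_{min})/n$, which is genuinely optimal only if it lies in $(0,1/3)$, the regime where $w^*=w_{min}$ is the correct choice; taking $w^*=w_{max}$ yields $\beta_{max}$, optimal if it lies in $(1/3,1/2)$; if $\beta_{max}\ge 1/2$ the constrained minimum is attained as $\beta\to 1/2$, where $\log_2\tfrac{1-\beta}{\beta}\to 0$ and the bound collapses to $\log_2 w_{max}+n$; and otherwise (which, since $w_{min}\le w_{max}$, means $\beta_{min}\ge 1/3\ge\beta_{max}$) the minimum is at $\beta=1/3$, where $\log_2\tfrac{1-\beta}{\beta}=1$ cancels the $\log_2 w^*$ terms and the bound is $\delta(\mathbf{c},w)+\sqrt{6n}+n\log_2\tfrac32$. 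Reading $\psi_{UB}$ off each branch gives exactly the stated $\beta_{opt}$‑indexed cases.

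\textbf{Main obstacle.} The delicate points are designing the descent rule so that $\R$ and $\log_2 Z$ are tracked by the \emph{same} counter $m$ (which is what allows $m$ to be eliminated via $m\le\R(w)-\log_2 w_0$), and the $\beta$‑optimization bookkeeping --- in particular seeing that whether $w_{min}$ or $w_{max}$ appears is dictated by whether the unconstrained optimizer falls below or above $1/3$, and that $\beta=1/3$ is precisely the self‑dual point at which neither extreme weight enters the bound.
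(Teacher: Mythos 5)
Your proposal is correct and follows essentially the same route as the paper: the same binary-tree descent using Lemmas~\ref{recursive_Delta_1} and~\ref{recursive_Delta_2}, the same counter $m$ that simultaneously tracks the drop in $\R$ and the retained fraction of $Z$, elimination of $m$, substitution of the McDiarmid slack bound, and the identical four-case minimization over $\beta$ with the $w_{min}$/$w_{max}$ choice governed by the sign of $1-\log_2\frac{1-\beta}{\beta}$ at $\beta=1/3$. Your descent rule (``take the smaller-$\R$ child iff it carries at least a $\beta$-fraction of the weight'') is stated slightly differently from the paper's two rules but yields the same two inequalities in $m$, so the argument goes through unchanged.
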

\begin{proof}

Let $\beta$ be a parameter, to be set later, such that $0 < \beta \leq 1/2$.  \jonathan{We require that $\beta > 0$ so $w(\bar{x}) > 0$.}  We construct a sequence of weight functions $w_n, w_{n-1}, \ldots, w_1,w_0$ where $w_j:\{-1,1\}^{j} \rightarrow [0,\infty)$.  Starting with our original weight function $w_n$, we use two rules to decide whether $w_{j-1} = w_{j-1}^-$ or $w_{j-1}=w_{j-1}^+$.

\textbf{Rule 1}: Given $w_j$, if $\min \{\sum_\mathbf{y} w_{j-1}^+(\mathbf{y}), \sum_\mathbf{y} w_{j-1}^-(\mathbf{y})\} < \beta \sum_\mathbf{y} w_j(\mathbf{y})$, we let 
\[
w_{j-1} = \argmax_{w_{j-1}^+,w_{j-1}^-} \left\{ \sum_\mathbf{y} w_{j-1}^+(\mathbf{y}), \sum_\mathbf{y} w_{j-1}^-(\mathbf{y}) \right\}
\]

\textbf{Rule 2}: Given $w_j$, if $\min \{\sum_\mathbf{y} w_{j-1}^+(\mathbf{y}), \sum_\mathbf{y} w_{j-1}^-(\mathbf{y})\} \geq \beta \sum_\mathbf{y} w_j(\mathbf{y})$, we let 
\[
w_{j-1} = \argmin_{w_{j-1}^+,w_{j-1}^-} \left\{ \R(w_{j-1}^+), \R(w_{j-1}^-) \right\}
\]

Note that $\R(w_{0}) = \log_2 w_n(\overline{\mathbf{x}})$ for some $\overline{\mathbf{x}} \in I_n$.  That is, after dividing our original space with $2^n$ states in half $n$ times, we are left with a single state.  Now, given that $Z(w)>0$, rule 1 guarantees $w_n(\overline{\mathbf{x}})>0$.  As proof by contradiction, assume that $w_n(\overline{\mathbf{x}})=0$.  This requires that for some for some integer $i$ (with $0 < i \leq n$) we have $0 < \sum_\mathbf{y} w_{i}(\mathbf{y})$ and $0 = \sum_\mathbf{y} w_{i-1}(\mathbf{y})$, but following rule 1 makes this impossible.

Our first step is to relate the weighted Rademacher complexity $\R(w_n)$ to the final leaf weight $w_n(\overline{\mathbf{x}})$ based on the number of times we use rule 2 when dividing the original tree.  Every time we use rule 2
\[
\R(w_{j-1}) = \min_{w_{j-1}^+,w_{j-1}^-} \{\R(w_{j-1}^+), \R(w_{j-1}^-)\}.
\]
By Lemma~\ref{recursive_Delta_2}, $\R(w_{j})$
is at least as large as the average of $\R(w_{j-1}^-)$ and $\R(w_{j-1}^+)$ plus one, making it at least as large as the minimum of $\R(w_{j-1}^-)$ and $\R(w_{j-1}^+)$ plus one.  Therefore $\R(w_{j}) \geq \R(w_{j-1}) +1$ whenever we use rule 2.  By lemma \ref{recursive_Delta_1} we have $\R(w_{j}) \geq \R(w_{j-1})$ regardless of whether we use rule 1 or 2.  Let $m$ be the number of times we have used Rule 2, then
\begin{equation}\label{bound_rademacher_m}
\R(w)  = \R(w_{n})  \geq \R(w_{0}) + m =  \log_2 w_n(\overline{\mathbf{x}}) + m.
\end{equation}

Now we relate the number of times we use rule 2 to the sum $Z(w_n)$.  Observe that $\sum_{\mathbf{y} \in I_{j-1}} w_{j-1}^+(\mathbf{y}) + \sum_{\mathbf{y} \in I_{j-1}} w_{j-1}^-(\mathbf{y}) = \sum_{\mathbf{z} \in I_j} w_{j} (\mathbf{z})$; therefore if we have used rule 1 we must have $\sum_\mathbf{y} w_{j-1}(\mathbf{y}) \geq (1-\beta) \sum_\mathbf{z} w_{j}(\mathbf{z})$ because we picked the largest of the two subtrees. If we have used rule 2 then $\sum_\mathbf{y} w_{j-1}(\mathbf{y}) \geq \beta \sum_\mathbf{z} w_{j}(\mathbf{z})$, because by definition we use rule 2 when the two subtrees both carry at least the fraction $\beta$ of the total weight. Therefore
\[
w_n(\overline{\mathbf{x}})=\sum_{\mathbf{y} \in I_0} w_0(\mathbf{y}) \geq \left(1-\beta\right)^{n-m} \beta^m \sum_{\mathbf{y} \in I_n} w_n(\mathbf{y}) =
\left(1-\beta\right)^{n} \left(\frac{\beta}{1-\beta}\right)^m Z(w_n).
\]
Taking logarithms \jonathan{note, this logarithm does not have to be base two, while the logarithm in $\R(w)  = \R(w_{n})  \geq  \log_2 w_n(\overline{\mathbf{x}}) + m$ \textbf{must} be base 2 (unless we rescale).  Changing the logarithm base here can't improve the bound, so we pick base 2 to match the other log.  Could discuss more/handle differently to avoid confusion about which log bases are important because it's somewhat confusing. }
\[
\log_2 w_n(\overline{\mathbf{x}}) \geq 
n \log_2 \left(1-\beta\right) + m \log_2 \left(\frac{\beta}{1-\beta}\right) + \log_2  Z(w_n) 
\]
\[
- m \log_2 \left(\frac{\beta}{1-\beta}\right)  \geq 
n \log_2 \left(1-\beta\right)  + \log_2  Z(w_n) -\log_2 w_n(\overline{\mathbf{x}})
\]
\begin{equation} \label{get_trivial_UB}
m \log_2 \left(\frac{1 - \beta}{\beta}\right)  \geq 
n \log_2 \left(1-\beta\right)  + \log_2  Z(w_n) -\log_2 w_n(\overline{\mathbf{x}}).
\end{equation} 
Note that $\log_2 \left(\frac{1 - \beta}{\beta}\right) >0$ for $0<\beta<1/2$ so
\begin{equation} \label{bound_m_Z}
m  \geq 
\frac{ n \log_2 \left(1-\beta\right)  + \log_2  Z(w_n) -\log_2 w_n(\overline{\mathbf{x}})}{\log_2 \left(\frac{1 - \beta}{\beta}\right) }
\end{equation}
and by combining Equations~\ref{bound_rademacher_m} and~\ref{bound_m_Z} we have
\begin{equation*}
\R(w) \geq  \log_2 w_n(\overline{\mathbf{x}}) +  \frac{ n \log_2 \left(1-\beta\right)  + \log_2  Z(w_n) -\log_2 w_n(\overline{\mathbf{x}})}{\log_2 \left(\frac{1 - \beta}{\beta}\right) }.
\end{equation*}
Applying the slack bound from Equation~\ref{bound_max_by_expectation} we have%
\footnote{This bound is scale invariant; if we scale the weight function by a constant $a$ so that $w'(\mathbf{x}) = a w(\mathbf{x})$ then $\R(w') = \log_2 a + \R(w)$, $\log_2 Z(w_n') = \log_2 a + \log_2 Z(w_n)$, and $\log_2 w'_n(\overline{\mathbf{x}}) = \log_2 a + \log_2 w_n(\overline{\mathbf{x}})$ so the bound remains unchanged.}
\begin{equation}\label{delta_lower_bound}
\delta(\mathbf{c}, w) \geq  \log_2 w_n(\overline{\mathbf{x}}) +  \frac{ n \log_2 \left(1-\beta\right)  + \log_2  Z(w_n) -\log_2 w_n(\overline{\mathbf{x}})}{\log_2 \left(\frac{1 - \beta}{\beta}\right) } - \sqrt{6n}
\end{equation}
with probability greater than .95.

Now we can choose $\beta$ to optimize this bound.  
Note that the bound in Equation~\ref{delta_lower_bound} contains the term $w_n(\overline{\mathbf{x}})$, however it is computationally intractable to explicitly find this leaf following the procedure outlined above.  Instead we conservatively use either the smallest or largest weight depending on the value of $\beta$.  If these weights cannot be computed we may alternatively pick $\beta = 1/3$ to eliminate $w_n(\overline{\mathbf{x}})$ from the bound.  Depending on the value of $\beta$ we have 4 cases outlined below:
\begin{enumerate}
\item When $\beta = 1 / 3$, the quantity $\log_2 \left(\frac{1 - \beta}{\beta}\right) = 1$ and
\[
\delta(\mathbf{c}, w) \geq  \log_2  Z(w_n) - n \log_2 (3/2) - \sqrt{6n}.
\]

\item When $0 < \beta < \frac{1}{3}$, the quantity $1 - \frac{1}{\log_2 \left(\frac{1 - \beta}{\beta}\right)} > 0$ and

\[
\delta(\mathbf{c}, w) \geq    \frac{ n \log_2 \left(1-\beta\right)  + \log_2  Z(w_n)}{\log_2 \left(\frac{1 - \beta}{\beta}\right) } + \log_2 w_{min} \Bigg(1 - \frac{1}{\log_2 \left(\frac{1 - \beta}{\beta}\right)} \Bigg) - \sqrt{6n}
\]

\item When $\frac{1}{3} < \beta < .5$, the quantity $1 - \frac{1}{\log_2 \left(\frac{1 - \beta}{\beta}\right)} < 0$ and

\[
\delta(\mathbf{c}, w) \geq    \frac{ n \log_2 \left(1-\beta\right)  + \log_2  Z(w_n)}{\log_2 \left(\frac{1 - \beta}{\beta}\right) } + \log_2 w_{max} \Bigg(1 - \frac{1}{\log_2 \left(\frac{1 - \beta}{\beta}\right)} \Bigg) - \sqrt{6n}.
\]

\item When $\beta = .5$, the quantity $\frac{1 - \beta}{\beta} = 0$ and we recover the trivial bound $\log_2 Z(w_n) \leq \log_2 w_{max} + n$ from equation \ref{get_trivial_UB}.

\end{enumerate}
To choose the best value for $\beta$ we minimize the upper bound on $\log_2 Z(w_n)$ with respect to $\beta$.  The upper bound on $\log_2 Z(w_n)$ is
\begin{align*}
&& \log_2 Z(w_n) \leq L(\beta) =  \log_2 \left(\frac{1 - \beta}{\beta}\right) \left( \delta(\mathbf{c}, w) + \sqrt{6 n} - \log_2 w^*(\beta) \right) - n \log_2 \left( 1 - \beta \right) + \log_2 w^*(\beta) \\
&& =  a \log_2 \left(\frac{1 - \beta}{\beta}\right) - n \log_2 \left( 1 - \beta \right) + \log_2 w^*(\beta),
\end{align*}
with
\[
a = \delta(\mathbf{c}, w) + \sqrt{6 n} - \log_2 w^*(\beta),
\]
where $w^*(\beta) = w_{min} = \min_\mathbf{x} \{w(\mathbf{x}) : w(\mathbf{x}) > 0\}$ for $\beta < \frac{1}{3}$ and $w^*(\beta) = w_{max} = \max_\mathbf{x} w(\mathbf{x})$ for $\beta > \frac{1}{3}$.  Differentiating we get
\[
L'(\beta) =  \frac{a \left(-\frac{1-\beta}{\beta^2}-\frac{1}{\beta}\right) \beta}{1-\beta}+\frac{n}{1-\beta}
\]
\[
L''(\beta) = \frac{a \beta \left(-\frac{1-\beta}{\beta^2}-\frac{1}{\beta}\right)}{(1-\beta)^2}+\frac{a
   \left(-\frac{1-\beta}{\beta^2}-\frac{1}{\beta}\right)}{1-\beta}+\frac{a \left(\frac{2
   (1-\beta)}{\beta^3}+\frac{2}{\beta^2}\right) \beta}{1-\beta}+\frac{n}{(1-\beta)^2}
\]

The first derivative has a root at $\beta = \frac{a}{n}$, with $L''(\frac{a}{n}) = \frac{n^3}{a (n-a)}$.  By definition the only meaningful values of $\beta$ are $0 < \beta \leq .5$, so either $0 < a < n/2$ and the second derivative is positive making this root a minimum or $\frac{a}{n}$ is outside our valid range for $\beta$ and the minimum occurs at an endpoint of the range.  Note that for $a>0$ we have $\lim_{\beta \rightarrow 0} L(\beta) = \infty$, so the minimum never occurs at the endpoint $\beta = 0$ when $a>0$.  If $a \leq 0$ then our slack bound has been violated and  we have estimated $\R(w)$ poorly with $\delta(\mathbf{c}, w)$, so it is appropriate to sample a new $\mathbf{c}$ and recompute $\delta(\mathbf{c}, w)$.  (To see this, note that $\R(w) = \E_{\mathbf{c}}\left[ \max_{\mathbf{x}\in \{-1,1\}^n} \{\langle \mathbf{c},\mathbf{x} \rangle + \log_2 w(\mathbf{x})\} \right] \geq \E_{\mathbf{c}}\left[ \max_{\mathbf{x}\in \{-1,1\}^n} \{\langle \mathbf{c},\mathbf{x} \rangle + \log_2 w_{min}\} \right] = n + \log_2 w_{min}$, so $\R(w) - \log_2 w_{min} = n > 0$.)  Also, $\lim_{\epsilon \rightarrow 0} L(\frac{1}{3} + \epsilon) = \lim_{\epsilon \rightarrow 0} L(\frac{1}{3} - \epsilon)$ (at $\beta = 1/3$ the term $w^*(\beta)$ doesn't appear in the bound so it doesn't matter whether $w^*(\beta) = w_{min}$ or $w^*(\beta) = w_{max}$).  Assuming we have sampled $\mathbf{c}$ such that $a > 0$, this means the optimal value of $\beta$ that minimizes our upper bound is:
\begin{align*}
\beta_{opt} = 
\begin{cases}
	 \frac{\delta(\mathbf{c}, w) + \sqrt{6 n} - \log_2 w_{min}}{n}, & \text{if } 0 < \frac{\delta(\mathbf{c}, w) + \sqrt{6 n} - \log_2 w_{min}}{n} < \frac{1}{3} \\  
	\frac{\delta(\mathbf{c}, w) + \sqrt{6 n} - \log_2 w_{max}}{n}, & \text{if } \frac{1}{3} < \frac{\delta(\mathbf{c}, w) + \sqrt{6 n} - \log_2 w_{max}}{n} < \frac{1}{2} \\   
	\frac{1}{2}, & \text{if } \frac{1}{2} < \frac{\delta(\mathbf{c}, w) + \sqrt{6 n} - \log_2 w_{max}}{n} \\    
	\frac{1}{3}, & \text{otherwise } 
\end{cases}
\end{align*}
Our upper bound on $Z(w_n)$ is given by 

\begin{align*}
\log_2 Z(w_n) \leq
\begin{cases}
	 \log_2 \left(\frac{1 - \beta_{opt}}{\beta_{opt}}\right) \left( \delta(\mathbf{c}, w) + \sqrt{6 n} - \log_2 w_{min} \right) - n \log_2 \left( 1 - \beta_{opt} \right) + \log_2 w_{min}, & \text{if } 0 < \beta_{opt} < \frac{1}{3} \\  
	\log_2 \left(\frac{1 - \beta_{opt}}{\beta_{opt}}\right) \left( \delta(\mathbf{c}, w) + \sqrt{6 n} - \log_2 w_{max} \right) - n \log_2 \left( 1 - \beta_{opt} \right) + \log_2 w_{max}, & \text{if } \frac{1}{3} < \beta_{opt} < \frac{1}{2} \\  
	 \delta(\mathbf{c}, w) + \sqrt{6 n} + n \log_2 \left( \frac{3}{2} \right) \approx \delta(\mathbf{c}, w) + \sqrt{6 n} + .58 n,  & \text{if } \beta_{opt} = \frac{1}{3}\\
     \log_2 w_{max} + n,  & \text{if } \beta_{opt} = \frac{1}{2}\\
\end{cases}
\end{align*}

\end{proof}


\subsubsection{Tightening the Slack Bound}
We can improve our high probability bounds on $\delta(\mathbf{c}, w)$ in terms of $Z(w)$ by generating $k$ independent vectors $\mathbf{c}_1, \mathbf{c}_2, \dots, \mathbf{c}_{k} \in \{-1, 1\}^n$, applying the optimization oracle from Assumption~\ref{opt_oracle} to each, and taking the mean $\bar{\delta}_k(w) = (\delta(\mathbf{c}_1, w) + \dots + \delta(\mathbf{c}_{k}, w))/k$.  This gives the bounds
\[
  \log_2 w^*(\beta) +  \frac{ n \log_2 \left(1-\beta\right)  + \log_2  Z(w) -\log_2 w^*(\beta)}{ \log_2 \left(\frac{1 - \beta}{\beta}\right) } - \sqrt{\frac{6 n}{k}} \leq \bar{\delta}_k(w) \leq \frac{1}{\lambda} \log_2 Z(w)  +\frac{\lambda \gamma - 1}{\lambda} \log_2  w^*(\lambda, \gamma) +\lambda \frac{ n}{2} + \sqrt{\frac{6 n}{k}}.
\]
In the last two sections we inverted these bounds and optimized over $\beta$ and $\lambda$ to obtain high probability bounds on $Z(w)$ in terms of $\delta(\mathbf{c}, w)$.  This process is unchanged, with the exceptions that we replace $\delta(w, \mathbf{c})$ (computed from a single $\mathbf{c} \in \{-1,1\}^n$) with $\bar{\delta}_k(w)$ and the term $\sqrt{6n}$ with $\sqrt{\frac{6 n}{k}}$.


Proof: recall the weight function $w:\{-1,1\}^n \rightarrow [0,\infty)$.  Let's define a new weight function $w':\{-1,1\}^{n k} \rightarrow [0,\infty)$ as $w'(\mathbf{c}_1, ..., \mathbf{c}_{k}) = w(\mathbf{c}_1)\times \dots \times w(\mathbf{c}_{k})$ for $\mathbf{c}_1, \dots, \mathbf{c}_{k} \in \{-1, 1\}^n$.  The sum of this new function's weights is
\begin{align*}
Z(w') = \sum_{\mathbf{x} \in \{-1,1\}^{k n}} w'(\mathbf{x}) \\
= \sum_{\mathbf{x}_1 \in \{-1,1\}^{ n}} \dots \sum_{\mathbf{x}_{k} \in \{-1,1\}^{n}} w(\mathbf{x}_1)\times \dots \times w(\mathbf{x}_{k}) \\
= \sum_{\mathbf{x}_1 \in \{-1,1\}^{ n}} w(\mathbf{x}_1)\times \dots \times \sum_{\mathbf{x}_{k} \in \{-1,1\}^{n}} w(\mathbf{x}_{k}) \\
= Z(w)^{k}.
\end{align*}
Also note that the largest and smallest non-zero weights in $w'$ are $w'_{max}=w^{k}_{max}$ and $w'_{min}=w^{k}_{min}$.   Define $\mathbf{c}' \in \{-1,1\}^{nk}$ as $\mathbf{c}' = (\mathbf{c}_1, ..,\mathbf{c}_{k})$.  The value $\delta(w', \mathbf{c}')$ for our new weight function is now
\begin{align*}
\delta(w', \mathbf{c}') = \max_{\mathbf{x}' \in \{-1,1\}^{n k}} \left\{ \log_2 w'(\mathbf{x}') + \langle \mathbf{c}',\mathbf{x}' \rangle \right\} \\
=  \max_{\mathbf{x}_1 \in \{-1,1\}^{n}} \left\{ \log_2 w(\mathbf{x}_1) + \langle \mathbf{c}_1,\mathbf{x}_1 \rangle \right\} + \dots + \max_{\mathbf{x}_{k} \in \{-1,1\}^{n}} \left\{ \log_2 w(\mathbf{x}_{k}) + \langle \mathbf{c}_{k},\mathbf{x}_{k} \rangle \right\} \\
= k \bar{\delta}_k(w).
\end{align*}
We lower bound $\bar{\delta}_k(w)$ by applying the bound from Equation~\ref{delta_lower_bound} to $w'$ (recalling that either $w^*(\beta) = w_{min}$ or $w^*(\beta) = w_{max}$) and find that

\begin{align*}
\bar{\delta}_k(w) = \frac{\delta(w', \mathbf{c}')}{k} \geq  \frac{\log_2 w^*(\beta)^{k}}{k} +  \frac{ n k \log_2 \left(1-\beta\right)  + \log_2  Z(w)^{k} -\log_2 w^*(\beta)^{k}}{k \log_2 \left(\frac{1 - \beta}{\beta}\right) } - \frac{\sqrt{6 nk}}{k} \\
=  \log_2 w^*(\beta) +  \frac{ n \log_2 \left(1-\beta\right)  + \log_2  Z(w) -\log_2 w^*(\beta)}{\log_2 \left(\frac{1 - \beta}{\beta}\right) } - \sqrt{\frac{6 n}{k}}.
\end{align*}
Similarly, for the upper bound on $\bar{\delta}_k(w)$ we apply the bound from Equation~\ref{delta_UB} to $w'$ and find that
\begin{align*}
\bar{\delta}_k(w) = \frac{\delta(w', \mathbf{c}')}{k} \leq \frac{1}{k} \left(\frac{1}{\lambda} \log_2 Z(w)^{k}  +\frac{\lambda \gamma - 1}{\lambda} \log_2  {w^*(\lambda, \gamma)}^{k} +\lambda \frac{n k}{2} + \sqrt{6 n k} \right)\\
= \frac{1}{\lambda} \log_2 Z(w)  +\frac{\lambda \gamma - 1}{\lambda} \log_2  w^*(\lambda, \gamma) +\lambda \frac{ n}{2} + \sqrt{\frac{6 n}{k}}.\\
\end{align*}

\end{document}